\documentclass{article}

\usepackage[main,final]{neurips_2026}

\usepackage[utf8]{inputenc} 
\usepackage[T1]{fontenc}    
\usepackage{hyperref}       
\usepackage{url}            
\usepackage{booktabs}       
\usepackage{amsfonts}       
\usepackage{nicefrac}       
\usepackage{microtype}      
\usepackage{xcolor}         

\usepackage{enumitem}  
\usepackage{graphicx}
\usepackage{subcaption}
\usepackage{siunitx}

\usepackage{multirow} 
\usepackage{amsmath}
\usepackage{amssymb}
\usepackage{mathtools}
\usepackage{amsthm}
\usepackage{bbm}
\usepackage[capitalize,noabbrev]{cleveref}

\usepackage{algorithm}
\usepackage{algpseudocode}
\theoremstyle{plain}
\newtheorem{theorem}{Theorem}[section]
\newtheorem{proposition}[theorem]{Proposition}
\newtheorem{lemma}[theorem]{Lemma}
\newtheorem{corollary}[theorem]{Corollary}
\theoremstyle{definition}
\newtheorem{definition}[theorem]{Definition}
\newtheorem{assumption}[theorem]{Assumption}
\theoremstyle{remark}
\newtheorem{remark}[theorem]{Remark}

\definecolor{custombeige}{HTML}{F4C7A8}
\definecolor{customtail}{HTML}{4fb9af}

\usepackage{bm}
\usepackage{braket}
\usepackage[T1]{fontenc}

\newcommand{\vx}{\bm{x}}

\definecolor{blue1}{RGB}{204,229,255}
\definecolor{blue2}{RGB}{0,76,153}

\usepackage{pifont}

\usepackage{tcolorbox}
\tcbuselibrary{skins, breakable}
\tcbset{
    boxrule=0pt,
    colframe=gray!90,
    colback=gray!20,
    coltitle=white,
    enhanced,
    boxsep=5pt,
    arc=2mm}

\title{The Sign Code: The Hidden Binary Nature of Deep Networks}

\author{%
  Niclas G\"oring$^{1}$\thanks{Corresponding author: \texttt{niclas.goring@physics.ox.ac.uk}}\quad
  Shuofeng Zhang$^{1}$\quad
  Roi Holtzman$^{1}$\\
  \bfseries Yoonsoo Nam$^{3}$\quad
  Ard A.~Louis$^{1}$\\[4pt]
  $^{1}$Rudolf Peierls Centre for Theoretical Physics, University of Oxford, Oxford, UK\\
  $^{2}$Kim Jaechul Graduate School of AI, KAIST, Seoul, South Korea
}

\begin{document}

\maketitle

\begin{abstract}

Every neuron in a trained network outputs a number that can be separated into its sign (positive or negative) and its magnitude (absolute size). We call the set of all neuron signs the \textit{sign code}. This raises a key question: does classification depend more on the sign code or on the magnitudes? For overparameterized vision classifiers with sign-preserving activations, the answer is overwhelmingly the sign code. A $k$-nearest neighbor classifier operating only on this binary sign code achieves accuracy comparable to the full model across different architectures, activation functions, and datasets. Moreover, replacing all magnitudes in every layer with random noise while keeping the original signs barely changes residual network performance. In contrast, flipping just 20\% of the signs reduces accuracy to chance.
Models generalize well when same-class inputs share similar sign codes. 
When this overlap is disrupted by a regularizer, training accuracy remains high, but test performance collapses.
A margin decomposition helps explain these results: the sign-only contribution to each logit grows as $\sqrt{d}$ in network width, dominating the magnitude residual. The natural unit of analysis for these networks may not be through their continuous weight space but instead through their discrete sign codes.

\end{abstract}

\section{Introduction}
\label{sec:intro}

ReLU networks admit a clean piecewise-linear decomposition: each input activates a subset of neurons, and within the resulting region the network acts as an affine map. This combinatorial structure has been a productive lens on deep learning, underpinning analyses of linear-region counts \citep{pascanu2014number,montufar2014number,raghu2017expressive,serra2018bounding,hanin2019complexity,gamba2022regions}, spline and max-affine views \citep{balestriero2018spline}, and gate- and path-based studies of expressivity and generalization \citep{neyshabur2015pathsgd,neyshabur2015norm,fiat2019decoupling,lakshminarayanan2020neural,gonon2024pathnorm,lebeurrier2026pathconditioned}. But modern networks rarely use ReLU, while SiLU, GELU, and tanh have no piecewise-linear structure to exploit so that the tools built around ReLU do not transfer. We ask whether the combinatorial object underlying these ReLU analyses, which side of each neuron's hyperplane the input falls on, survives the move to smooth activations, and whether it remains the dominant carrier of classification signal.

For any sign-preserving activation $\varphi$ (ReLU, SiLU, GELU, tanh), each neuron output decomposes exactly as $h_i(x) = S_i(x) \cdot M_i(x)$, where $S_i \in \{-1, +1\}$ is a binary sign and $M_i = |h_i|$ is a non-negative magnitude (\cref{fig:hero}a,b). The sign identifies which combinatorial region of representation space the input occupies; the magnitude specifies where within that region. We call the collection $\{S_i(x)\}$ the \emph{sign code} of the network, and ask whether it alone carries the essential information for classification. Across architectures, activations, and datasets, our experiments reveal a sharp asymmetry: the sign code is sufficient to recover full-model accuracy and causally dominant under intervention, while magnitudes are largely redundant (\cref{fig:hero}c).

\begin{figure}[htb!]
  \centering
   \includegraphics[width=\textwidth]{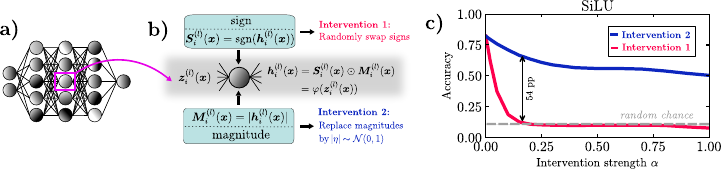}
  \caption{\textbf{a)}~Every neuron in a feedforward network computes a pre-activation $z_j^{(\ell)}$, which the activation function maps to $h_j^{(\ell)} = \varphi(z_j^{(\ell)})$.
  \textbf{b)}~For any sign-preserving activation, this output decomposes as $h_j = S_j \odot M_j$: a binary sign $S_j \in \{-1,+1\}$ encoding \emph{which} side of the neuron's hyperplane the input falls on, and a non-negative magnitude $M_j = |h_j|$ encoding \emph{how far}.
  We test the causal role of each component with two channel-isolating interventions: randomly flipping signs while preserving magnitudes, or replacing magnitudes while enforcing the  sign code.
  \textbf{c)}~Flipping just 20\% of sign bits collapses accuracy to chance (\textcolor{red}{red}); replacing \emph{every} magnitude in every layer preserves far more classification performance (\textcolor{blue}{blue}), and is nearly lossless for residual networks. The forward computation is dominated by binary routing decisions rather than raw activation strengths. }
  \label{fig:hero}
\end{figure}

Our \textbf{contributions} are:
\begin{enumerate}[leftmargin=20pt,itemsep=2pt,topsep=2pt]
\item[(i)] \textbf{The sign code is informationally sufficient for classification.}
A $k$-nearest-neighbour classifier on the last-layer binary sign code, with no access to magnitudes, weights, or logits, matches
full model accuracy in the overparameterized regime, across
architectures, activations, and datasets. We give a formal sufficient
certificate for this transition
(\cref{sec:sufficient,sec:theory}).

\item[(ii)] \textbf{Networks are robust to magnitude perturbations but not to sign code perturbations.}
We compare two interventions that isolate the sign and magnitude channels: destroying signs while preserving magnitudes is catastrophic, whereas randomizing magnitudes while preserving the clean sign trajectory is much less harmful.
We check this asymmetry across four activations and two architecture families (\cref{sec:causal}, \cref{fig:causal}).

\item[(iii)] \textbf{Class-coherent sign geometry is tied to generalization.}
Models generalize when same-class inputs share similar sign codes. A regularizer that actively repels within-class sign codes maintains high training accuracy while sharply degrading test accuracy, suggesting that class-coherent sign structure is a necessary ingredient for generalization in these models (\cref{sec:gen}).
\end{enumerate}


Taken together, these results suggest that the natural unit of analysis for these networks may not be their continuous weight space but their \emph{discrete sign code}. 

\subsection{Related Work}
\label{sec:related}

A large body of work studies ReLU networks through their input-space
partitions, counting linear regions and bounding expressive gains from
depth
\citep{pascanu2014number,montufar2014number,raghu2017expressive,serra2018bounding,hanin2019complexity,gamba2022regions}
or interpreting deep networks as input-dependent affine maps indexed by
activation patterns \citep{balestriero2018spline}. Binary activation
patterns have been examined as hash representations \citep{he2021neural},
learned via specialised objectives \citep{park2021unsupervised}, and used
for verification \citep{geng2023towards}; on the theoretical side,
path-norm, path-kernel, and gate-decoupling analyses elevate gates to a
primary object of study
\citep{neyshabur2015pathsgd,neyshabur2015norm,lakshminarayanan2020neural,fiat2019decoupling,gonon2024pathnorm,lebeurrier2026pathconditioned}.
The gate-decoupling line is  closest to our work: \citet{fiat2019decoupling} separates gating from linearity and \citet{lakshminarayanan2020neural} builds kernels from the active subnetwork.
Both treat gates as a useful analytical device for ReLU; we make three stronger claims: informational sufficiency, causal dominance under intervention, and a link between sign geometry and generalization (\cref{sec:sufficient,sec:causal,sec:gen}). We establish them across all sign-preserving activations, including smooth ones with no piecewise-linear structure.
Neural collapse \citep{papyan2020prevalence,han2022neural} describes
convergence of continuous features toward class means; our results hold
outside this regime (see \cref{appx_NC} for a thorough comparison). A
comprehensive discussion of related work is given in
Appendix~\ref{appx:related}.

\section{Background: The Sign--Magnitude Decomposition}
\label{sec:decomposition}

\subsection{Setup}
\label{sec:setup}

We consider an $L$-layer feedforward network $f\colon \mathcal{X} \to \mathbb{R}^K$ (FFNN) with hidden representations $\mathbf{h}^{(0)}(x) = x$ and, for $\ell = 1,\ldots,L$,
\begin{equation}
  \mathbf{z}^{(\ell)}
  = W^{(\ell)}\,\mathbf{h}^{(\ell-1)} + \mathbf{b}^{(\ell)}\,,
  \qquad
  \mathbf{h}^{(\ell)}
  = \varphi\!\bigl(\mathbf{z}^{(\ell)}\bigr)\,,
  \label{eq:ffnn}
\end{equation}
where $W^{(\ell)} \in \mathbb{R}^{d_\ell \times d_{\ell-1}}$, $\mathbf{b}^{(\ell)} \in \mathbb{R}^{d_\ell}$, and $\varphi$ is a coordinatewise activation function.
The output layer produces logits $f(x) = W^{(L+1)}\mathbf{h}^{(L)} + \mathbf{b}^{(L+1)}$.

Each neuron $j$ in layer $\ell$ computes a pre-activation $z_j^{(\ell)} = \mathbf{w}_j^\top \mathbf{h}^{(\ell-1)} + b_j$.
The zero set $\{z_j^{(\ell)} = 0\}$ is a hyperplane in the representation space of the preceding layer.
A layer of $d_\ell$ neurons therefore partitions its input space by $d_\ell$ hyperplanes, assigning each input a binary code, one bit per neuron, that records which side of every hyperplane it falls on.
The question is whether the activation function preserves this partition.

\subsection{Sign-preserving activations}
\label{sec:sign-pres}

\begin{definition}[Sign-preserving activation]
\label{def:sign-preserving}
An activation $\varphi\colon \mathbb{R}\to\mathbb{R}$ is \textbf{sign-preserving} if\;
$\operatorname{sgn}\!\bigl(\varphi(z)\bigr) = \operatorname{sgn}(z)$
for every $z \neq 0$, where $\operatorname{sgn}(t) = +1$ for $t > 0$ and $-1$ otherwise.
\end{definition}
ReLU, SiLU, GELU, and tanh are all sign-preserving which will be the central point of study.
A ReLU neuron zeroes out the negative half-space entirely, producing the sharpest possible partition; smooth activations such as SiLU and GELU attenuate rather than annihilate the negative side, but the sign of the output still mirrors the sign of the pre-activation. We discuss non sign-preserving  activations such as softplus in Appendix \ref{app:non-sign-preserving}.
\subsection{The decomposition}
\label{sec:decomp-def}

Every sign-preserving activation allows us to split each neuron's output into two factors: a binary routing decision and a continuous magnitude.

\begin{definition}[Sign--magnitude decomposition]
\label{def:sign-mag}
Let $\varphi$ be sign-preserving and let $h_j^{(\ell)}(\vx) = \varphi\!\bigl(z_j^{(\ell)}(\vx)\bigr)$ be the output of neuron $j$ at layer $\ell$.
The \textbf{sign} and \textbf{magnitude} of neuron $j$ are
\begin{equation}
  S_j^{(\ell)}(\vx)
  \;=\; \operatorname{sgn}\!\bigl(h_j^{(\ell)}(\vx)\bigr)
  \;\in\; \{-1,+1\}\,,
  \qquad
  M_j^{(\ell)}(\vx)
  \;=\; \bigl|h_j^{(\ell)}(\vx)\bigr|
  \;\in\; \mathbb{R}_{\geq 0}\,,
  \label{eq:neuron-decomp}
\end{equation}
so that the neuron's output is
\begin{equation}
\colorbox{teal!15}{$\displaystyle
  h_j^{(\ell)}(\vx)
  \;=\; S_j^{(\ell)}(\vx) \;\cdot\; M_j^{(\ell)}(\vx)\,.
$}
\label{eq:hero}
\end{equation}
\end{definition}
In vector form, the full hidden representation decomposes as $\mathbf{h}^{(\ell)} = \mathbf{S}^{(\ell)} \odot \mathbf{M}^{(\ell)}$.
The \textbf{sign code} of the network is the concatenation of all per-layer sign vectors:
\begin{equation}
  \mathbf{S}(\vx)
  = \bigl(\mathbf{S}^{(1)}(\vx),\;\ldots,\;\mathbf{S}^{(L)}(\vx)\bigr)
  \;\in\; \{-1,+1\}^{D}\,,
  \qquad
  D = \textstyle\sum_{\ell=1}^{L} d_\ell\,.
  \label{eq:full-sign-code}
\end{equation}
Because $\varphi$ is sign-preserving, $S_j^{(\ell)} = \operatorname{sgn}(z_j^{(\ell)})$: the sign code is determined entirely by the pre-activation signs and is therefore \emph{identical} across all sign-preserving activations sharing the same weights.
Intuitively, $\mathbf{S}(x)$ records \emph{which} combinatorial region of representation space the input occupies, which side of every learned hyperplane it falls on, while $\mathbf{M}(x)$ records \emph{where} within that region. The sign code recovers the classical ReLU gate pattern $\mathbf{g}^{(\ell)} = \mathbf{1}[z^{(\ell)} > 0]$ under the relabelling $+1 \leftrightarrow \text{active}$, $-1 \leftrightarrow \text{inactive}$. The sign--magnitude decomposition for smooth activations (SiLU, GELU, tanh) captures the same combinatorial structure without requiring exact piecewise-affine behaviour.  We discuss the natural generalization of this decomposition to convolutional layers as well as residual blocks in \cref{appx_signextr}.

\section{Experiment: Sign Codes Are Informationally Sufficient}
\label{sec:sufficient}

The sign--magnitude decomposition (Definition~\ref{def:sign-mag}) splits every hidden representation into a discrete sign code and a continuous magnitude.
We begin with a natural question:

\begin{tcolorbox}[
  colback=custombeige!45, colframe=black,
  boxsep=2pt, left=4pt, right=4pt, top=4pt, bottom=2pt
]
\textbf{Question 1:} Does the sign code alone, without any magnitude information, already contain enough structure to predict class labels?
\end{tcolorbox}


\noindent
We test this by building the simplest possible classifier that operates \emph{only} on the binary sign code of a single hidden layer, and comparing its accuracy to that of the full network.

\subsection{Experimental setup: Sign-$k$NN}
\label{sec:signknn}

For each trained model we extract the sign code $\mathbf{S}^{(L)}(x) \in \{-1,+1\}^{d_L}$ of the \emph{last hidden layer only}: $\operatorname{sgn}\!\bigl(\mathbf{h}^{(L)}(x)\bigr)$ for FFNNs, the output code of the last \texttt{BasicBlock} for ResNets, and the output code of the final transformer block for ViTs.
This is a strict subset of the full-network sign code~\eqref{eq:full-sign-code}.
Full extraction details are given in Appendix~\ref{app:extraction}.
We equip the sign code space with the normalised Hamming distance
\begin{equation}
  d_H\!\bigl(\mathbf{S}^{(L)}(x),\, \mathbf{S}^{(L)}(x')\bigr)
  \;=\;
  \frac{1}{d_L}
  \sum_{j=1}^{d_L}
  \mathbf{1}\!\bigl[S^{(L)}_j(x) \neq S^{(L)}_j(x')\bigr]\,,
  \label{eq:hamming}
\end{equation}
and classify each test point by majority vote among its $k$-nearest training-set neighbours in this distance.
The resulting classifier, sign-$k$NN$_k$, uses only the binary sign code of a single layer. It has no access to activation magnitudes, to the sign codes of any earlier layer, or to the network's weights or logits. To quantify how well the sign code captures the network's classification, we define the \textbf{sign gap}:
\begin{equation}
  \Delta_{\mathrm{sgn}}
  \;=\;
  \bigl|\,\mathrm{err}_{\mathrm{SignKNN}} - \mathrm{err}_{\mathrm{model}}\,\bigr|\,,
  \label{eq:sign-gap}
\end{equation}
the absolute difference between the sign-$k$NN test error and the full model's test error.
When $\Delta_{\mathrm{sgn}} \approx 0$, the binary sign code is sufficient to match the network's test error; when $\Delta_{\mathrm{sgn}}$ is large, information outside the last-layer sign code remains relevant to the model's predictions.

\subsection{Results:  The sign gap across activation functions and architectures}

Figure~\ref{fig:sufficiency}a) displays $\Delta_{\mathrm{sgn}}$ as a function of network depth and width across all three architectures (FFNN, ResNet, ViT; SiLU activation, CIFAR-10). See Appendix \ref{appx_traindetail} for a description of the exact architectures and the general training setup. 
For shallow or narrow networks the sign gap is substantial: the sign code loses information that the full network retains.
As depth and width increase, $\Delta_{\mathrm{sgn}}$ shrinks monotonically and becomes negligible ($< 0.03$) once both dimensions are sufficiently large.
Sign-code sufficiency is not an unconditional property of the architecture; it is an \emph{emergent} property of overparameterized networks.  There is a clear phase boundary in the depth--width plane beyond which the last-layer sign code matches the full network's test performance. This is the regime targeted by the certificate in Section~\ref{sec:theory}. Figure~\ref{fig:sufficiency}b) plots sign-$k$NN against full model test accuracy for overparameterized networks (depth 6, width 1024 FFNN, standard ResNet-18 and depth 6, 256 embedding dimension, 8 heads ViT). We train 540 models spanning three architectures, four sign-preserving activations, three datasets (MNIST, CIFAR-10, CIFAR-100), 5 dataset sizes and 3 seeds. In the overparameterized regime, points cluster tightly around the diagonal with no systematic dependence on architecture, activation, or dataset. The experiment is conservative: it uses only the last layer's sign code, a small fraction of the full routing record, yet this already matches model performance across all four activations, including smooth ones (SiLU, GELU, tanh) with no piecewise-affine structure (thus generalizes the observations in \citep{he2021neural}). This is not a consequence of neural collapse: our models are far from the collapsed regime (NC1 $> 1$, NCM accuracy well below model accuracy, see \cref{appx_NC}). 
\begin{figure}[htb!]
  \centering
  \includegraphics[width=\textwidth]{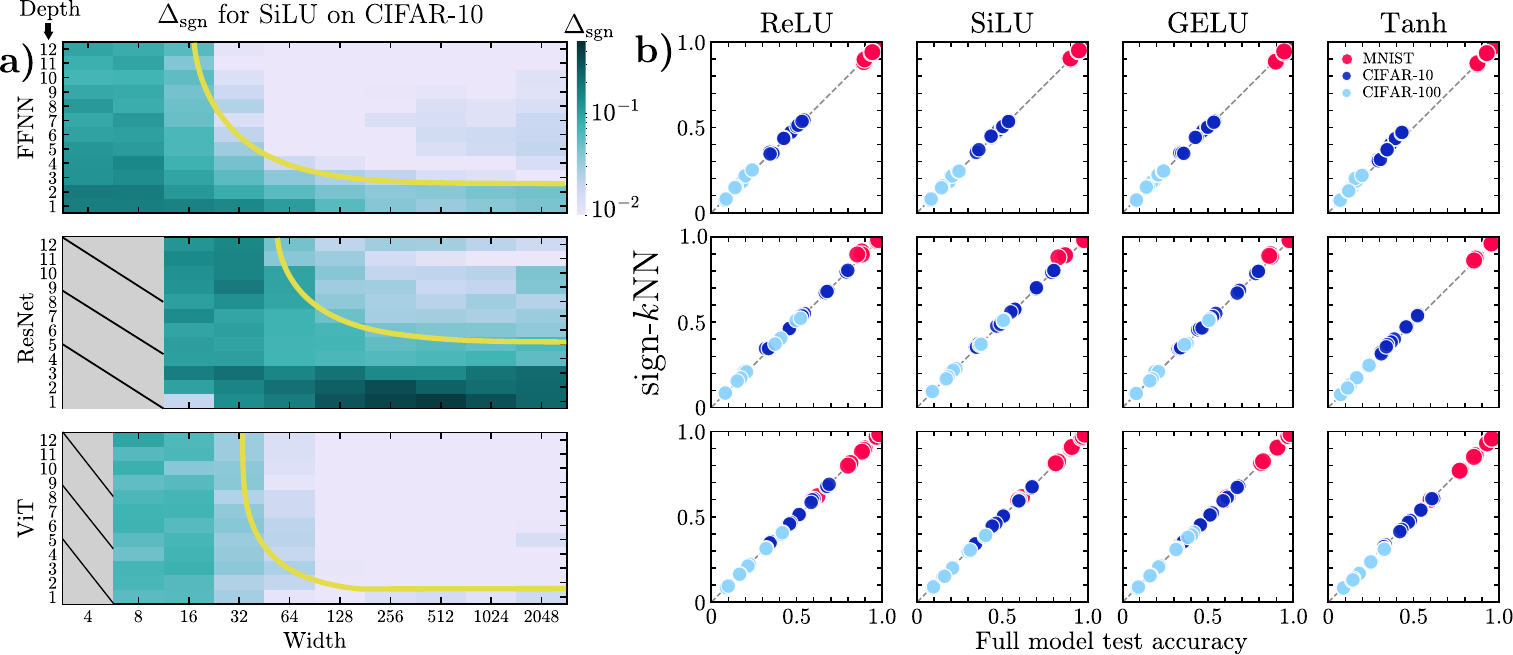}
  \caption{\textbf{a)}~The sign gap $\Delta_{\mathrm{sgn}}$ (Eq.~\ref{eq:sign-gap}) as a function of network depth and width for three architectures (FFNN, ResNet, ViT) with  SiLU on  CIFAR-10.
  The gap shrinks monotonically with increasing depth and width, with a clear phase boundary (yellow lines) beyond which $\Delta_{\mathrm{sgn}} < 0.03$.
  \textbf{b)}~Sign-$k$NN accuracy versus full model test accuracy across standard architectures (rows: FFNN, ResNet-18, ViT), activations (columns: ReLU, SiLU, GELU, Tanh), and datasets (colours: MNIST, CIFAR-10, CIFAR-100). The dashed line is $y{=}x$. Training and model details for all figures are in Appendix \ref{appx_traindetail}.}
  \label{fig:sufficiency}
\end{figure}

\begin{tcolorbox}[
  colback=custombeige!45, colframe=black,
  boxsep=2pt, left=4pt, right=4pt, top=4pt, bottom=2pt
]
\textbf{Observation 1: Last-layer sign codes become informationally sufficient in the overparameterized regime}. The last-layer sign code empirically suffices to match the model's test error; the sign gap $\Delta_{\mathrm{sgn}}$ vanishes with depth and width.
\end{tcolorbox}


\section{Experiment: Testing robustness to sign code and magnitude perturbations}
\label{sec:causal}

The last section showed that sign codes are informative:
a nearest-neighbour classifier on the binary sign code of a single
hidden layer tracks model accuracy.
But informative does not mean causal.
Both sign and magnitude are deterministic functions of the same weights,
so either channel, or both, could carry the classification signal:

\begin{tcolorbox}[
  colback=custombeige!45, colframe=black,
  boxsep=2pt, left=4pt, right=4pt, top=4pt, bottom=2pt
]
\textbf{Question 2:} How does the network respond to distinct perturbations: one that perturbs only the discrete sign code, and one that perturbs only the continuous magnitudes?
\end{tcolorbox}


If the sign code is merely a by-product and the real computation flows
through magnitudes, then corrupting signs should be tolerable and
corrupting magnitudes catastrophic.
If the sign code is the dominant computational channel, the pattern should be the reverse.

\subsection{Experimental setup: Two channel-isolating interventions}

Recall that at every hidden layer~$\ell$, the representation decomposes
as
$\mathbf{h}^{(\ell)} =
\mathbf{S}^{(\ell)} \odot \mathbf{M}^{(\ell)}$
(Definition~\ref{def:sign-mag}).
Both interventions select neurons independently with probability
$\alpha \in [0,1]$ at every hidden layer, but they are not meant to be
symmetric natural forward passes.
They are channel-isolating tests: one destroys routing while preserving
magnitudes, and the other destroys magnitude information while enforcing
the clean routing trajectory.
The interventions differ in \emph{which} channel is corrupted
(Figure~\ref{fig:causal}, right) and in the projection needed to keep
that corruption isolated.

\paragraph{Intervention~1: destroy routing, preserve magnitudes.}
Each selected neuron has its sign negated while its magnitude is
preserved.
The corrupted representation propagates freely through all downstream
layers, potentially altering their sign codes and magnitudes.

\paragraph{Intervention~2: destroy magnitudes, preserve routing.}
We first cache the clean sign code $\mathbf{S}^{(\ell)}(x)$ at every
layer via an unmodified forward pass.
Each selected neuron then has its magnitude replaced by $|\eta|$,
$\eta \sim \mathcal{N}(0,1)$, a class-agnostic random draw carrying
zero information about the input.
The replacement is \emph{projected}: at every downstream layer, the
cached clean sign code is enforced and only magnitudes are recomputed.
This is a causal-mediation test: it asks what remains of the computation
when the sign trajectory is held fixed, not whether the network could
recover that trajectory after magnitude corruption.
At $\alpha = 1$, every magnitude in every layer is a random number, yet
the sign code is perfectly preserved.

\begin{figure}[t]
  \centering
  \includegraphics[width=\textwidth]{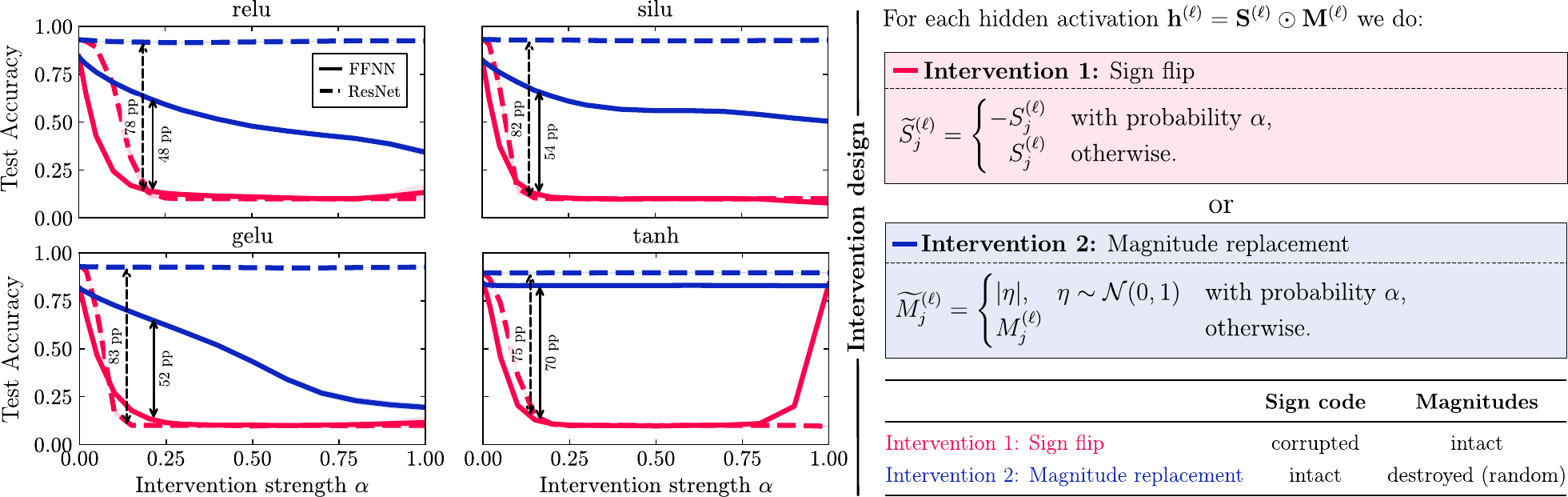}
  \caption{%
    \textbf{Left (2$\times$2):} Accuracy under channel destruction.
    \textcolor{red}{Red}: sign flip (sign code routing destroyed, magnitudes
    intact).
    \textcolor{blue}{Blue}: magnitude replacement with random
    $|\mathcal{N}(0,1)|$ draws (magnitudes destroyed, sign code routing intact).
    Solid: FFNN; dashed: ResNet; mean $\pm 1$\,s.d.\ over 5 draws.
    \textbf{Right:} Intervention formulas and summary of what each
    intervention removes.
    Routing destruction collapses accuracy to chance by
    $\alpha = 0.2$ across all conditions; projected replacement of every
    magnitude preserves far more classification performance, and is nearly
    lossless for residual networks. 
  }
  \label{fig:causal}
\end{figure}


\subsection{Results of the robustness tests}
We sweep $\alpha \in [0,1]$ for four activations (ReLU, SiLU, GELU, tanh) and two architectures (FFNN on SVHN, ResNet-18 on CIFAR-10), averaging over five independent mask draws. Flipping just $20\%$ of sign bits ($\alpha{=}0.2$) collapses accuracy to near chance (${\sim}10\%$) across all eight conditions; the ResNet's skip connections do not compensate (\cref{fig:causal}). At $\alpha{=}1$, the ResNet maintains $93$--$95\%$ accuracy across all four activations: every magnitude in every layer is an independent random number, yet the network classifies as well as with correct magnitudes. The FFNN retains near-full accuracy for tanh and degrades substantially for the rest, but remains far above the corresponding sign-flip failures.

The two interventions are deliberately asymmetric. Intervention~2 asks: \emph{does the network need any magnitude information beyond what the sign code encodes?} Answering this requires isolating the magnitude channel, if corrupted magnitudes cascaded through downstream layers, they would induce secondary sign flips, corrupting both channels simultaneously. Projected replacement caches the clean sign code and enforces it at every layer, holding routing fixed while destroying magnitude content. Intervention~1 needs no analogous projection: negating a sign preserves $|{-}m|{=}|m|$, so the magnitude channel is untouched by construction. The FFNN degradation under Intervention~2 is itself predicted by activation symmetry: tanh is odd, so its output is entirely determined by the sign code (near-zero drop for magnitudes and symmetry upon full sign-code swaps); ReLU, GELU, and SiLU each possess a sign-blind even component $[\varphi(z){+}\varphi(-z)]/2$ invisible to the sign code, with values $0.50$, $0.34$, $0.23$ at $z{=}1$ which matches the degradation ranking (ReLU ${\sim}36$\,pp; GELU ${\sim}34$\,pp; SiLU ${\sim}32$\,pp; tanh ${\sim}2$\,pp). The ResNet's immunity follows from the skip connection preserving sign structure under projected replacement.

\begin{tcolorbox}[
  colback=custombeige!45, colframe=black,
  boxsep=2pt, left=4pt, right=4pt, top=4pt, bottom=2pt
]
\textbf{Observation 2: Networks are robust to magnitude perturbations but not to sign code perturbations.} Projected replacement of every magnitude preserves far more classification performance than sign corruption; flipping $\sim$20\% of signs destroys the forward pass.

\end{tcolorbox}

\section{Experiment: Testing class-coherent sign geometry}
\label{sec:gen}

\cref{sec:sufficient,sec:causal} showed that overparameterized networks can classify through sign code, and that corrupting the sign trajectory is far more destructive than corrupting magnitudes. But computation through signs does not by itself explain generalization. A network could classify through signs and still assign each example an idiosyncratic, non-transferable sign code, hence: 
\begin{tcolorbox}[
  colback=custombeige!45, colframe=black,
  boxsep=2pt, left=4pt, right=4pt, top=4pt, bottom=2pt
]
\textbf{Question 3:} What sign code structure supports generalization?
\end{tcolorbox}

We focus here on evidence for necessity and leave sufficiency to future work.

\subsection{Experimental setup: Varying class neighbourhood purity through a regularizer $\mathcal{L}_{\mathrm{rep}}$ }

\paragraph{Measuring sign code class structure.}
A sign code neighbourhood of a datapoint $\bm{x}_i$ is label-pure if the nearest training
points in Hamming distance share the class of $\bm{x}_i$ .
We measure this property with the leave-one-out (LOO) purity   $\pi_k(\bm{x}_i)$  at $k$ neighbours:
\begin{equation}
  \pi_k(\bm{x}_i)
  = \frac{1}{k}\sum_{j \in N_k^{\setminus i}(\bm{x}_i)}
    \mathbf{1}[y_j = y_i], 
\end{equation}
where $N_k^{\setminus i}$ denotes the $k$ nearest training points in
Hamming distance excluding $\bm{x}_i$ itself.
When $\bar\pi_k = \mathbb{E}_i[\pi_k(\bm{x}_i)]$ is high, same-class inputs
occupy tight, overlapping regions in sign space, exactly the structure
that sign-kNN exploits.
When $\bar\pi_k$ is near $1/C$ (chance), the sign code carries no local
class geometry.
To test the role of class-pure neighbourhoods in generalization, we design a regularizer that actively destroys them. 

\paragraph{Causal test: within-class sign repulsion.} For each neuron we form a continuous proxy of its binary routing
decision, concatenate across layers, and $\ell_2$-normalise.
The repulsion loss $\mathcal{L}_{\mathrm{rep}}$ penalises routing overlap between same-class
pairs:
\begin{equation}\label{eq:repulsion}
  \mathcal{L}_{\mathrm{rep}}
  = \frac{1}{|\mathcal{P}|}
    \sum_{(i,j)\in\mathcal{P}}
    \bigl(\mathbf{S}(\vx_i)^\top   \mathbf{S}(\vx_j) \bigr)^2,
  \qquad
  \mathcal{P}
  = \{(i,j): i \neq j,\; y_i = y_j\}.
\end{equation}
For ReLU, SiLU, and GELU we use a sigmoid gate proxy
$S_j = \sigma(\bm{z}_j/\tau) \in (0,1)$, which targets the active set
that dominates the forward pass; for tanh, where positive and
negative neurons contribute equally, we instead use a tanh sign
proxy $S_j = \tanh(\bm{z}_j/\tau) \in (-1,1)$.
The choice of proxy, and why gate repulsion is strictly stronger
than sign repulsion for non-odd activations, is discussed in
Appendix~\ref{app:reg-gate}.

\subsection{Results}
\begin{figure}[htb!]
  \centering
  \includegraphics[width=\linewidth]{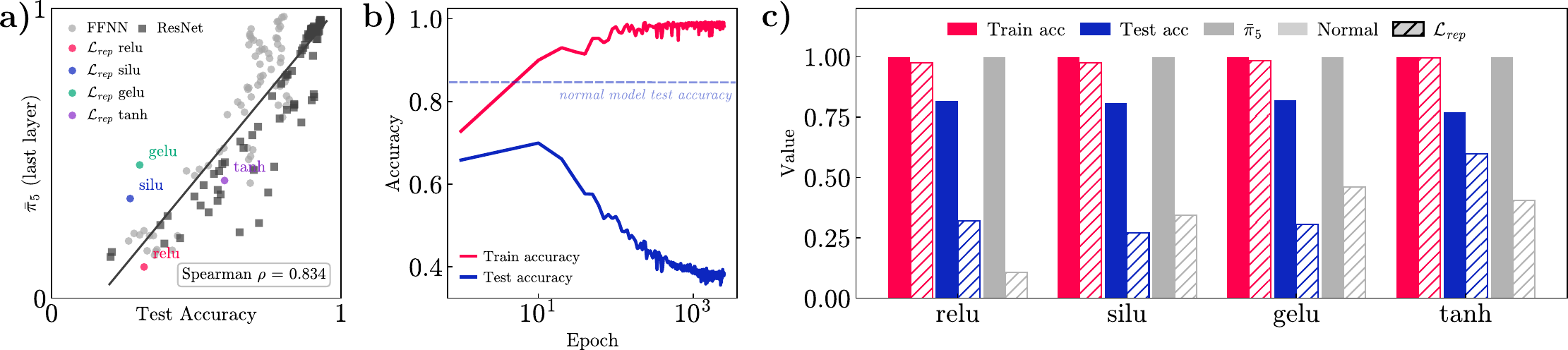}
  \caption{
    \textbf{a)}~Train LOO purity ($k{=}5$) versus test accuracy
    across models (grey: FFNN on SVHN, widths 4--1024, depths 1--12, SiLU) and $\mathcal{L}_{\mathrm{rep}}$-trained models (coloured).
    High neighbourhood purity tracks generalization
    (Spearman $\rho = 0.834$); $\mathcal{L}_{\mathrm{rep}}$
    models fall on the same trend line.
    \textbf{b)}~Training dynamics under
    $\mathcal{L}_{\mathrm{rep}}$ (SiLU, width 1024, depth 8, SVHN).
    Train accuracy reaches $97\%$; test accuracy drops to
    ${\sim}40\%$. Dashed: Test accuracy of the normally trained baseline.
    \textbf{c)}~Summary across activations (depth 8, width 1024,
    SVHN). Repulsion sharply degrades test accuracy and LOO purity for
    ReLU, SiLU, and GELU while training accuracy
    remains~${\geq}97\%$. Tanh is partially robust
    ($\delta \approx 0$; Appendix~\ref{app:asymmetry}).}
  \label{fig:gen}
\end{figure}

Figure~\ref{fig:gen}a) plots train LOO purity against test accuracy for a range of models. Label-pure sign-code neighbourhoods predict good generalization; label-mixed ones the opposite. The $\mathcal{L}_{\mathrm{rep}}$-trained models land in the low-purity, low-accuracy corner \emph{on the same trend line}, confirming the relationship is not an artifact of architecture or dataset. Panel~b) shows SiLU dynamics: train accuracy reaches $97\%$ while test accuracy monotonically drops to ${\sim}40\%$ as the regularizer destroys class-aligned routing. Panel~c) summarises across activations: for ReLU, SiLU, and GELU, repulsion sharply degrades test accuracy and purity while training accuracy stays above $97\%$; tanh is partially robust due to its magnitude symmetry ($\delta \approx 0$, Appendix~\ref{app:asymmetry}). The network still memorises every training example--representational capacity is intact--but the routing patterns are no longer shared across same-class inputs, and generalization collapses. While $\mathcal{L}_{\mathrm{rep}}$ may affect correlated properties, this combination of high training and degraded test accuracy provides causal evidence that within-class sign-code coherence plays a direct role in generalization for the models studied.

\begin{tcolorbox}[
  colback=custombeige!45, colframe=black,
  boxsep=2pt, left=4pt, right=4pt, top=4pt, bottom=2pt
]
\textbf{Observation 3: Class-coherent sign geometry is necessary for generalization.} Networks generalize when same-class inputs share similar sign codes; a regularizer that repels within-class sign codes preserves training accuracy but collapses test accuracy.
\end{tcolorbox}
\section{Theory: Overparameterization Makes Neural Networks Binary}
\label{sec:theory}

\cref{sec:sufficient,sec:causal} showed that sign codes become
informationally sufficient in the overparameterized regime and are
causally dominant under channel-isolating interventions.
We now provide an exact decomposition of the logit into a sign-only
term and a magnitude residual, together with a certificate that predicts
when the sign-only classifier agrees with the full network.

\subsection{Decomposing the logit}
\label{sec:logit-decomp}

The class-$c$ logit decomposes as
$f_c(\vx) = \sum_{j=1}^{d} w_{cj}\,S_j(\vx)\,M_j(\vx) + b_c$
for a last hidden layer of width~$d$.
Replacing each magnitude $M_j$ by its conditional mean
$\mu_j(s) = \mathbb{E}[M_j \mid S_j{=}s]$, the unique MMSE
predictor of $M_j$ from the single bit $S_j$
(Appendix~\ref{app:gauge},
Proposition~\ref{prop:gauge-opt}), splits this into two exact
pieces:
\begin{equation}\label{eq:logit-decomp}
\colorbox{teal!15}{$\displaystyle
  f_c(\vx)
  \;=\;
  \underbrace{
    \sum_j w_{cj}\,S_j(\vx)\,\mu_j\!\bigl(S_j(\vx)\bigr) + b_c
  }_{\displaystyle\tilde f_c(\vx)\;
    \text{: sign-only logit}}
  \;\;+\;\;
  \underbrace{
    \sum_j w_{cj}\,S_j(\vx)\,
    \bigl[\,M_j(\vx) - \mu_j\!\bigl(S_j(\vx)\bigr)\,\bigr]
  }_{\displaystyle R_c(\vx)\;
    \text{: magnitude residual}}
$}
\end{equation}
The sign-only logit $\tilde f_c$ depends on $\vx$ only through the
binary sign code.
The residual $R_c$ captures everything the sign code misses:
the per-neuron deviations of each magnitude from its
sign-conditional average.
The pairwise margin inherits the same split:
\begin{equation}\label{eq:margin-decomp}
  \underbrace{M_{a,c}(x)}_{\text{full margin}}
  \;=\;
  \underbrace{\widetilde{M}_{a,c}(x)}_{\text{sign-only margin}}
  \;+\;
  \underbrace{R_{a,c}(x)}_{\text{residual}}\,,
\end{equation}
and the sign-only prediction disagrees with the full network only
when the residual overwhelms the sign-only margin:
$|R_{a,c}| \ge |\widetilde{M}_{a,c}|$.

\subsection{A certificate for sign domination}
\label{sec:z1}

The decomposition raises a quantitative question:
when is the sign-only margin large enough that the residual cannot
flip the prediction?
For each input $\vx$ with full-model predicted class
$\hat y = \hat f(\vx)$ and each competitor $c \neq \hat y$,
define the per-neuron vote bias
$
\label{eq:vote-bias}
  m_c(\vx)
  \;=\;
  \frac{1}{d}\Bigl[
    \sum_{j=1}^{d}
    (w_{\hat y,j} - w_{c,j})\,S_j(\vx)\,\mu_j\!\bigl(S_j(\vx)\bigr)
    \;+\;
    (b_{\hat y} - b_c)
  \Bigr],$
which measures how decisively, on average, each neuron's
sign-weighted contribution favours $\hat y$ over~$c$, and the
conditional residual moment $
\label{eq:resid-moment}
  r_c^2(\vx)
  \;=\;
  \mathbb{E}\!\bigl[
    R_{\hat y(\vx),c}^{\,2}(X)
    \;\big|\;
    \mathbf{S}(X){=}\mathbf{S}(\vx),\;
    \hat f(X){=}\hat y(\vx)
  \bigr],
$
which captures how much the residual fluctuates among inputs that
share the same sign code and full-model prediction.
Conditioning on both ensures that the class pair and sign-only
margin are fixed under the conditional measure.
This is a population quantity; in the experiments we estimate the same
signal using the residual diagnostics described in
Appendix~\ref{app:theorem-conditions}.
The \textbf{gate-domination score} is:
\begin{equation}\label{eq:z1}
  Z_1(\vx)
  \;=\;
  \min_{c \neq \hat y}\;
  \frac{|\widetilde{M}_{\hat y,c}(\vx)|}{r_c(\vx)}\,.
\end{equation}

\begin{theorem}[Gate domination]
\label{thm:gate-dom}
Let\/ $\varepsilon_1 = \Pr_x(\hat f(x) \neq \hat{\tilde f}(x))$
be the disagreement rate between the full network and its sign-only
version.
Assume that ties in both classifiers occur with probability zero
under the data distribution, and that all residual second moments are
finite.

\smallskip
\noindent\textbf{(i)~Disagreement bound: }
$\label{eq:eps1}
  \varepsilon_1
  \;\le\;
  (C{-}1)\;\mathbb{E}_x\!\bigl[Z_1^{-2}(\vx)\bigr].$
  
\noindent\textbf{(ii)~Width scaling:}
If\/ $r_c^2(\vx) \le d\,\bar\sigma_c^2(\vx)$
for all competitors~$c$, then
$\label{eq:z1-scaling}
  Z_1(x)
  \;\ge\;
  \Bigl(\min_{c \neq \hat y}
  \frac{|m_c(\vx)|}{\bar\sigma_c(\vx)}\Bigr)\sqrt{d}\,.$
  \end{theorem}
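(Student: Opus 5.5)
Part (i) follows from a union bound over competitors combined with a conditional Chebyshev inequality. Part (ii) is an algebraic rewriting of the sign-only margin in terms of the vote bias.

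\emph{Step 1 (reduction to margin flips).} Fix $\vx$ with full prediction $\hat y=\hat f(\vx)$. Since ties have probability zero, $\hat{\tilde f}(\vx)\neq \hat y$ requires some competitor $c\neq\hat y$ with $\widetilde M_{\hat y,c}(\vx)\le 0$. We also have $M_{\hat y,c}(\vx)>0$, because $\hat y$ is the full-model argmax. By the decomposition \eqref{eq:margin-decomp}, $R_{\hat y,c}(\vx)=M_{\hat y,c}(\vx)-\widetilde M_{\hat y,c}(\vx)>|\widetilde M_{\hat y,c}(\vx)|$. Hence
\[
\{\hat f\neq\hat{\tilde f}\}\subseteq\bigcup_{c\neq\hat y}\{|R_{\hat y,c}|\ge|\widetilde M_{\hat y,c}|\}.
\]

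\emph{Step 2 (conditional Chebyshev).} Condition on the cell $\{\mathbf S(X)=\mathbf S(\vx),\ \hat f(X)=\hat y\}$. On this cell both $\widetilde M_{\hat y,c}$ and the class pair are constant, because $\tilde f$ depends on $X$ only through $\mathbf S(X)$. Markov's inequality applied to $R^2$ therefore gives
\[
\Pr\bigl(|R_{\hat y,c}|\ge|\widetilde M_{\hat y,c}|\,\big|\,\text{cell}\bigr)\le \frac{r_c^2}{\widetilde M_{\hat y,c}^2}\le Z_1^{-2},
\]
where the last inequality uses the minimum over $c$ in \eqref{eq:z1}. The residual moments are finite by assumption, so these conditional expectations make sense. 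If $\widetilde M_{\hat y,c}=0$, then $Z_1=0$ and the bound holds trivially with the convention $Z_1^{-2}=\infty$. Summing over the $C-1$ competitors and taking the outer expectation over cells (tower property) yields $\varepsilon_1\le (C-1)\,\mathbb E_x[Z_1^{-2}(\vx)]$. Since $Z_1$ is constant on each cell, the average over cells equals $\mathbb E_x[Z_1^{-2}]$.

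\emph{Step 3 (width scaling).} By definition of $m_c$, we have $\widetilde M_{\hat y,c}(\vx)=d\,m_c(\vx)$ exactly: the sign-only margin is the sum of the per-neuron sign-weighted differences plus the bias difference. Under the hypothesis $r_c\le\sqrt d\,\bar\sigma_c$,
\[
\frac{|\widetilde M_{\hat y,c}|}{r_c}\ge \frac{d|m_c|}{\sqrt d\,\bar\sigma_c}=\sqrt d\,\frac{|m_c|}{\bar\sigma_c}.
\]
Taking the minimum over $c$ on both sides gives (ii), using $\min_c a_c b\ge b\min_c a_c$ for $b\ge0$.

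\emph{Main obstacle.} The delicate point is the measure-theoretic justification in Step 2. We must check that conditioning on the sign code and prediction is well defined, which is immediate since $\mathbf S$ takes finitely many values and only cells of positive probability matter. We must also check that $\widetilde M_{\hat y,c}$ is genuinely fixed under this conditioning. That holds because the $\mu_j(s)$ are deterministic population quantities, so $\tilde f$ is a function of the sign code alone. The remaining care is the inequality direction from the tie assumption, which is handled in Step 1.
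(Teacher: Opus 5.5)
Your proposal is correct and follows the paper's proof essentially step for step. Both use the same margin-violation reduction, then the same conditional Markov inequality on $R_{\hat y,c}^{2}$ given $\sigma(\mathbf S,\hat f)$, summed over the $C-1$ competitors and integrated via the tower property, and for part (ii) the same identity $|\widetilde M_{\hat y,c}|=d\,|m_c|$. The only cosmetic difference is that you bound each competitor term by $Z_1^{-2}$ directly, whereas the paper first bounds it by $Z_{1,c}^{-2}$ and then takes the maximum over $c$.
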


\begin{proof}
 The full proof is in Appendix~\ref{app:gate-dom-proof}.
\end{proof}

\subsection{Interpretation and empirical validation}
Figure~\ref{fig:gate-dom} displays $Z_1$, $\varepsilon_1$, and
the sign-gap $\Delta_{\mathrm{sgn}}$ across 360 trained FFNNs
(12~depths $\times$ 10~widths $\times$ 3~seeds, SiLU, CIFAR-10).
The theorem gives a sufficient explanation for why overparameterized
networks can become effectively binary.
Each neuron casts a sign vote for or against each class;
\emph{depth} composes these votes into a decisive bias
($m_c > 0$), and \emph{width} averages out the magnitude noise
through $\sqrt{d}$-scaling.
The sign-only margin grows as $d \cdot m_c$ while the residual
grows at most as $\sqrt{d} \cdot \bar\sigma_c$, so $Z_1$
improves with added neurons until the sign-only classifier
agrees with the full network.
Neither mechanism suffices alone: at depth~1, votes are unbiased
($m_c \approx 0$) and $Z_1 < 0.23$ even at width~4096
(panel~\textbf{b)}, bottom row); narrow-but-deep networks have biased votes
but too few neurons for the averaging to suppress magnitude noise.
The yellow line in panels~(\textbf{a)}--\textbf{c)}) marks where both conditions
are jointly met: above it,
$\Delta_{\mathrm{sgn}} < 0.03$ and $\varepsilon_1 < 0.1$.
Panel~\textbf{d)} is the key result: the plotted configurations collapse
onto a single $Z_1$-vs-$\varepsilon_1$ curve, regardless of
whether a model achieves its $Z_1$ through width, depth, or both.
We verify all theorem conditions and discuss their scope in
Appendix~\ref{app:theorem-conditions}; the analysis is most
informative for FFNNs with asymmetric activations,
while for tanh and residual architectures the causal interventions
(\cref{sec:causal}) provide the stronger evidence.

\begin{figure}[htb!]
\centering
\includegraphics[width=\textwidth]{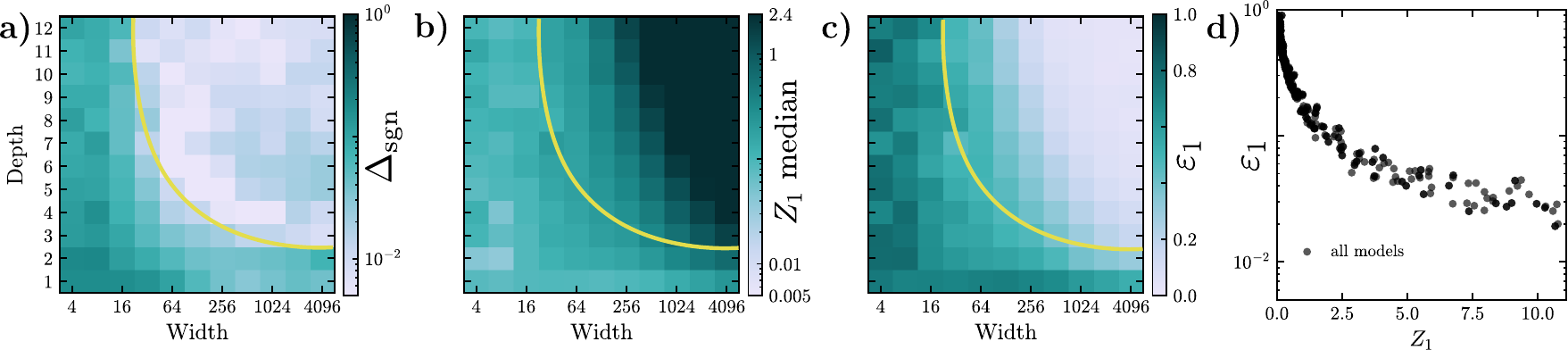}
\caption{%
(FFNN, SiLU, CIFAR-10; widths 4--4096, depths 1--12, mean over
3~seeds).
The yellow line marks the empirical gate-domination region where both the sign gap and sign-only disagreement are small; above it,
$\Delta_{\mathrm{sgn}} < 0.03$ and $\varepsilon_1 < 0.1$
\textbf{(a)}~Sign code kNN gap $\Delta_{\mathrm{sgn}}$: the sign
code matches model test error (gap~${<}0.03$) throughout this
region.
\textbf{(b)}~Median $Z_1$: increases with both width
($\sqrt{d}$-averaging) and depth (stronger vote bias $m_c$).
\textbf{(c)}~Disagreement rate $\varepsilon_1$: mirrors $Z_1$
with the same diagonal phase boundary.
\textbf{(d)}~$Z_1$ predicts $\varepsilon_1$: the disagreement
rate decreases monotonically with $Z_1$ across the plotted
configurations, independent of how each model achieves its $Z_1$
value.}
\label{fig:gate-dom}
\end{figure}

\section{Discussion}
\label{sec:discussion}


In overparameterized networks, classification depends far more on a discrete sign code with class-coherent sign geometry than on precise activation magnitudes. Preserving sign trajectories while randomizing magnitudes retains high accuracy, whereas small sign perturbations collapse performance catastrophically.
This suggests that existing generalization theory may be missing an important granularity: the  binary sign code quotients weight space into equivalence classes, and generalization bounds formulated directly over this discrete object would be invariant to magnitude redundancies that can make norm-based bounds vacuous. Our analysis also offers a hypothesis on why gated activations (SiLU, GELU, and GLU variants) are dominantly used in modern transformers: their large magnitude asymmetry (\cref{app:asymmetry}) amplifies the sign-only margin, making the binary routing channel maximally decisive, whereas symmetric activations like tanh place no signal in this channel.
\paragraph{Limitations.} While we empirically treat multiple architectures, our theoretical analysis centres on feedforward networks. The gate-domination score $Z_1$ can remain modest for residual networks and odd activations even when the sign trajectory is causally dominant under intervention (\cref{app:asymmetry}), and we restrict attention to sign-preserving activations.
Our empirical evidence comes from small-scale vision benchmarks (CIFAR, SVHN, MNIST) and classification tasks; whether the sign code mechanism extends to large-scale settings, language modelling, or regression is left to future work.


\newpage

\begin{ack}
This work was supported by the Laboratory for AI Security Research (LASR). The views expressed in this paper are those of the authors and do not necessarily reflect the position of LASR or His Majesty’s Government.
\end{ack}

\bibliographystyle{plainnat} 
\bibliography{bib}

@article{he2021neural,
  title={Neural networks behave as hash encoders: An empirical study},
  author={He, Fengxiang and Lei, Shiye and Ji, Jianmin and Tao, Dacheng},
  journal={arXiv preprint arXiv:2101.05490},
  year={2021}
}

@inproceedings{park2021unsupervised,
  title={Unsupervised representation learning via neural activation coding},
  author={Park, Yookoon and Lee, Sangho and Kim, Gunhee and Blei, David},
  booktitle={International Conference on Machine Learning},
  pages={8391--8400},
  year={2021},
  organization={PMLR}
}

@inproceedings{geng2023towards,
  title={Towards reliable neural specifications},
  author={Geng, Chuqin and Le, Nham and Xu, Xiaojie and Wang, Zhaoyue and Gurfinkel, Arie and Si, Xujie},
  booktitle={International Conference on Machine Learning},
  pages={11196--11212},
  year={2023},
  organization={PMLR}
}

@inproceedings{zhang2017rethinking,
  title={Understanding deep learning requires rethinking generalization},
  author={Zhang, Chiyuan and Bengio, Samy and Hardt, Moritz and Recht, Benjamin and Vinyals, Oriol},
  booktitle={International Conference on Learning Representations},
  year={2017}
}

@article{dziugaite2017computing,
  title={Computing nonvacuous generalization bounds for deep (stochastic) neural networks with many more parameters than training data},
  author={Dziugaite, Gintare Karolina and Roy, Daniel M},
  journal={arXiv preprint arXiv:1703.11008},
  year={2017}
}

@inproceedings{neyshabur2015norm,
  title={Norm-based capacity control in neural networks},
  author={Neyshabur, Behnam and Tomioka, Ryota and Srebro, Nathan},
  booktitle={Conference on learning theory},
  pages={1376--1401},
  year={2015},
  organization={PMLR}
}

@article{bartlett2017spectrally,
  title={Spectrally-normalized margin bounds for neural networks},
  author={Bartlett, Peter L and Foster, Dylan J and Telgarsky, Matus J},
  journal={Advances in neural information processing systems},
  volume={30},
  year={2017}
}

@inproceedings{arora2018stronger,
  title={Stronger generalization bounds for deep nets via a compression approach},
  author={Arora, Sanjeev and Ge, Rong and Neyshabur, Behnam and Zhang, Yi},
  booktitle={International conference on machine learning},
  pages={254--263},
  year={2018},
  organization={PMLR}
}

@inproceedings{
  jiang2020fantastic,
  title={Fantastic Generalization Measures and Where to Find Them},
  author={Yiding Jiang and Behnam Neyshabur and Hossein Mobahi and Dilip Krishnan and Samy Bengio},
  booktitle={International Conference on Learning Representations},
  year={2020},
  url={https://openreview.net/forum?id=SJgIPJBFvH}
}

@article{belkin2019reconciling,
  title={Reconciling modern machine-learning practice and the classical bias--variance trade-off},
  author={Belkin, Mikhail and Hsu, Daniel and Ma, Siyuan and Mandal, Soumik},
  journal={Proceedings of the National Academy of Sciences},
  volume={116},
  number={32},
  pages={15849--15854},
  year={2019},
  publisher={National Academy of Sciences}
}

@article{nakkiran2021deep,
  title={Deep double descent: Where bigger models and more data hurt},
  author={Nakkiran, Preetum and Kaplun, Gal and Bansal, Yamini and Yang, Tristan and Barak, Boaz and Sutskever, Ilya},
  journal={Journal of Statistical Mechanics: Theory and Experiment},
  volume={2021},
  number={12},
  pages={124003},
  year={2021},
  publisher={IOP Publishing and SISSA}
}

@article{neyshabur2018role,
  title={Towards Understanding the Role of Over-Parametrization in Generalization of Neural Networks},
  author={Behnam Neyshabur and Zhiyuan Li and Srinadh Bhojanapalli and Yann LeCun and Nathan Srebro},
  journal={ArXiv},
  year={2018},
  volume={abs/1805.12076},
  url={https://api.semanticscholar.org/CorpusID:44130076}
}

@article{soudry2018implicit,
author = {Soudry, Daniel and Hoffer, Elad and Nacson, Mor Shpigel and Gunasekar, Suriya and Srebro, Nathan},
title = {The implicit bias of gradient descent on separable data},
year = {2018},
issue_date = {January 2018},
publisher = {JMLR.org},
volume = {19},
number = {1},
issn = {1532-4435},
journal = {J. Mach. Learn. Res.},
month = jan,
pages = {2822–2878},
numpages = {57}
}

@article{lyu2020gradient,
  title={Gradient Descent Maximizes the Margin of Homogeneous Neural Networks},
  author={Kaifeng Lyu and Jian Li},
  journal={ArXiv},
  year={2019},
  volume={abs/1906.05890},
  url={https://api.semanticscholar.org/CorpusID:189898036}
}

@inproceedings{chizat2020implicit,
  title={Implicit Bias of Gradient Descent for Wide Two-layer Neural Networks Trained with the Logistic Loss},
  author={L{\'e}na{\"i}c Chizat and Francis R. Bach},
  booktitle={Annual Conference Computational Learning Theory},
  year={2020},
  url={https://api.semanticscholar.org/CorpusID:211076120}
}

@article{keskar2017largebatch,
  title={On Large-Batch Training for Deep Learning: Generalization Gap and Sharp Minima},
  author={Nitish Shirish Keskar and Dheevatsa Mudigere and Jorge Nocedal and Mikhail Smelyanskiy and Ping Tak Peter Tang},
  journal={ArXiv},
  year={2016},
  volume={abs/1609.04836},
  url={https://api.semanticscholar.org/CorpusID:5834589}
}

@inproceedings{dinh2017sharp,
  title={Sharp Minima Can Generalize For Deep Nets},
  author={Laurent Dinh and Razvan Pascanu and Samy Bengio and Yoshua Bengio},
  booktitle={International Conference on Machine Learning},
  year={2017},
  url={https://api.semanticscholar.org/CorpusID:7636159}
}

@inproceedings{glorot2011deep,
  title={Deep Sparse Rectifier Neural Networks},
  author={Xavier Glorot and Antoine Bordes and Yoshua Bengio},
  booktitle={International Conference on Artificial Intelligence and Statistics},
  year={2011},
  url={https://api.semanticscholar.org/CorpusID:2239473}
}

@article{pascanu2014number,
  title={On the number of response regions of deep feed forward networks with piece-wise linear activations},
  author={Razvan Pascanu and Guido Mont{\'u}far and Yoshua Bengio},
  journal={arXiv: Learning},
  year={2013},
  url={https://api.semanticscholar.org/CorpusID:9254343}
}

@inproceedings{montufar2014number,
  title={On the Number of Linear Regions of Deep Neural Networks},
  author={Guido Mont{\'u}far and Razvan Pascanu and Kyunghyun Cho and Yoshua Bengio},
  booktitle={Neural Information Processing Systems},
  year={2014},
  url={https://api.semanticscholar.org/CorpusID:5941770}
}

@inproceedings{raghu2017expressive,
  title={On the Expressive Power of Deep Neural Networks},
  author={Maithra Raghu and Ben Poole and Jon M. Kleinberg and Surya Ganguli and Jascha Narain Sohl-Dickstein},
  booktitle={International Conference on Machine Learning},
  year={2016},
  url={https://api.semanticscholar.org/CorpusID:2838204}
}

@article{serra2018bounding,
  title={Bounding and Counting Linear Regions of Deep Neural Networks},
  author={Thiago Serra and Christian Tjandraatmadja and Srikumar Ramalingam},
  journal={ArXiv},
  year={2017},
  volume={abs/1711.02114},
  url={https://api.semanticscholar.org/CorpusID:34019680}
}

@inproceedings{hanin2019complexity,
  title={Complexity of Linear Regions in Deep Networks},
  author={Boris Hanin and David Rolnick},
  booktitle={International Conference on Machine Learning},
  year={2019},
  url={https://api.semanticscholar.org/CorpusID:59291990}
}

@inproceedings{gamba2022regions,
  title={Are All Linear Regions Created Equal?},
  author={Matteo Gamba and Adrian Chmielewski-Anders and Josephine Sullivan and Hossein Azizpour and Marten Bjorkman},
  booktitle={International Conference on Artificial Intelligence and Statistics},
  year={2022},
  url={https://api.semanticscholar.org/CorpusID:247084088}
}

@inproceedings{balestriero2018spline,
  title={A Spline Theory of Deep Networks},
  author={Randall Balestriero and Richard Baraniuk},
  booktitle={International Conference on Machine Learning},
  year={2018},
  url={https://api.semanticscholar.org/CorpusID:51882762}
}

@article{neyshabur2015pathsgd,
  title={Path-SGD: Path-Normalized Optimization in Deep Neural Networks},
  author={Behnam Neyshabur and Ruslan Salakhutdinov and Nathan Srebro},
  journal={ArXiv},
  year={2015},
  volume={abs/1506.02617},
  url={https://api.semanticscholar.org/CorpusID:2101905}
}

@article{lakshminarayanan2020neural,
  title={Neural Path Features and Neural Path Kernel : Understanding the role of gates in deep learning},
  author={Chandrashekar Lakshminarayanan and Amit Singh},
  journal={ArXiv},
  year={2020},
  volume={abs/2006.10529},
  url={https://api.semanticscholar.org/CorpusID:219792921}
}

@article{fiat2019decoupling,
  title={Decoupling Gating from Linearity},
  author={Jonathan Fiat and Eran Malach and Shai Shalev-Shwartz},
  journal={ArXiv},
  year={2018},
  volume={abs/1906.05032},
  url={https://api.semanticscholar.org/CorpusID:127096892}
}

@article{gonon2024pathnorm,
  title={A path-norm toolkit for modern networks: consequences, promises and challenges},
  author={Antoine Gonon and Nicolas Brisebarre and Elisa Riccietti and R{\'e}mi Gribonval},
  journal={ArXiv},
  year={2023},
  volume={abs/2310.01225},
  url={https://api.semanticscholar.org/CorpusID:263605441}
}

@article{lebeurrier2026pathconditioned,
  title={Path-conditioned training: a principled way to rescale ReLU neural networks},
  author={Arthur Lebeurrier and Titouan Vayer and R{\'e}mi Gribonval},
  journal={ArXiv},
  year={2026},
  volume={abs/2602.19799},
  url={https://api.semanticscholar.org/CorpusID:285973448}
}

@inproceedings{huang2020visualizations,
  title={Understanding Generalization through Visualizations},
  author={W. Ronny Huang and Zeyad Ali Sami Emam and Micah Goldblum and Liam H. Fowl and Justin K. Terry and Furong Huang and Tom Goldstein},
  booktitle={ICBINB@NeurIPS},
  year={2019},
  url={https://api.semanticscholar.org/CorpusID:182952973}
}

@article{kornblith2019similarity,
  title={Similarity of Neural Network Representations Revisited},
  author={Simon Kornblith and Mohammad Norouzi and Honglak Lee and Geoffrey E. Hinton},
  journal={ArXiv},
  year={2019},
  volume={abs/1905.00414},
  url={https://api.semanticscholar.org/CorpusID:141460329}
}

@article{papyan2020prevalence,
  title={Prevalence of neural collapse during the terminal phase of deep learning training},
  author={Vardan Papyan and Xuemei Han and David L. Donoho},
  journal={Proceedings of the National Academy of Sciences of the United States of America},
  year={2020},
  volume={117},
  pages={24652 - 24663},
  url={https://api.semanticscholar.org/CorpusID:221172897}
}

@article{han2022neural,
  title={Neural Collapse Under MSE Loss: Proximity to and Dynamics on the Central Path},
  author={X. Y. Han and Vardan Papyan and David L. Donoho},
  journal={ArXiv},
  year={2021},
  volume={abs/2106.02073},
  url={https://api.semanticscholar.org/CorpusID:235352583}
}

@article{he2016deep,
  title={Deep Residual Learning for Image Recognition},
  author={Kaiming He and X. Zhang and Shaoqing Ren and Jian Sun},
  journal={2016 IEEE Conference on Computer Vision and Pattern Recognition (CVPR)},
  year={2015},
  pages={770-778},
  url={https://api.semanticscholar.org/CorpusID:206594692}
}

@article{dosovitskiy2021image,
  title={An Image is Worth 16x16 Words: Transformers for Image Recognition at Scale},
  author={Alexey Dosovitskiy and Lucas Beyer and Alexander Kolesnikov and Dirk Weissenborn and Xiaohua Zhai and Thomas Unterthiner and Mostafa Dehghani and Matthias Minderer and Georg Heigold and Sylvain Gelly and Jakob Uszkoreit and Neil Houlsby},
  journal={ArXiv},
  year={2020},
  volume={abs/2010.11929},
  url={https://api.semanticscholar.org/CorpusID:225039882}
}

@article{hendrycks2016gelu,
  title={Gaussian Error Linear Units (GELUs)},
  author={Dan Hendrycks and Kevin Gimpel},
  journal={arXiv: Learning},
  year={2016},
  url={https://api.semanticscholar.org/CorpusID:125617073}
}

@article{ramachandran2018searching,
  title={Searching for Activation Functions},
  author={Prajit Ramachandran and Barret Zoph and Quoc V. Le},
  journal={ArXiv},
  year={2018},
  volume={abs/1710.05941},
  url={https://api.semanticscholar.org/CorpusID:10919244}
}

@article{elfwing2018silu,
title = {Sigmoid-weighted linear units for neural network function approximation in reinforcement learning},
journal = {Neural Networks},
volume = {107},
pages = {3-11},
year = {2018},
note = {Special issue on deep reinforcement learning},
issn = {0893-6080},
doi = {https://doi.org/10.1016/j.neunet.2017.12.012},
url = {https://www.sciencedirect.com/science/article/pii/S0893608017302976},
author = {Stefan Elfwing and Eiji Uchibe and Kenji Doya}
}

@article{han2016deepcompression,
  title={Deep Compression: Compressing Deep Neural Network with Pruning, Trained Quantization and Huffman Coding},
  author={Song Han and Huizi Mao and William J. Dally},
  journal={arXiv: Computer Vision and Pattern Recognition},
  year={2015},
  url={https://api.semanticscholar.org/CorpusID:2134321}
}

@article{frankle2019lottery,
  title={The Lottery Ticket Hypothesis: Finding Sparse, Trainable Neural Networks},
  author={Jonathan Frankle and Michael Carbin},
  journal={arXiv: Learning},
  year={2018},
  url={https://api.semanticscholar.org/CorpusID:53388625}
}

\newpage


\appendix


\section{Related work}
\label{appx:related}



A central motivation for this paper is the generalization puzzle in overparameterized deep learning.
Modern networks can interpolate random labels while still generalizing on natural data, forcing a re-examination of classical capacity arguments \citep{zhang2017rethinking}.
This led to a large literature on norm-, margin-, PAC-Bayes-, and compression-based explanations of generalization \citep{dziugaite2017computing,neyshabur2015norm,bartlett2017spectrally,arora2018stronger,jiang2020fantastic}, and to studies of the interpolation regime through double descent and overparameterization \citep{belkin2019reconciling,nakkiran2021deep,neyshabur2018role}.
A complementary line analyzes the implicit bias of optimization, showing convergence toward max-margin or otherwise structured solutions in linear, homogeneous, and wide-network settings \citep{soudry2018implicit,lyu2020gradient,chizat2020implicit}.
Another influential debate ties generalization to sharpness or flatness of minima \citep{keskar2017largebatch,dinh2017sharp}.
Our work is motivated by the same puzzle, but shifts the explanatory target from continuous weights to the discrete routing pattern induced by the network.

A second line studies neural networks through their activation patterns and the partitions they induce.
Early work on rectifiers emphasized sparse activation and the practical advantages of non-saturating units \citep{glorot2011deep}.
For ReLU and related piecewise-linear networks, several papers quantified the number of response or linear regions and the expressive gains from depth \citep{pascanu2014number,montufar2014number,raghu2017expressive,serra2018bounding}.
Later work showed that the number of linear regions realized in practice can be far below worst-case counts \citep{hanin2019complexity}, and that linear regions are not equally important functionally \citep{gamba2022regions}.
Closely related spline and max-affine views interpret deep networks as input-dependent affine maps indexed by activation patterns \citep{balestriero2018spline}.
Our sign code perspective is closest to this tradition, but differs in emphasis: the goal here is not primarily to count regions or characterize expressivity, but to argue that the binary code identifying a region can itself become the dominant carrier of class information.

Some aspects of binary activation patterns have been studied. It was shown in \cite{he2021neural} that the sign code induced by a trained classifier can behave as a hash representation. Park \textit{et al.}  introduces a specialized unsupervised objective that learns binary activation codes in order to increase encoder expressivity and improve downstream classification and retrieval \cite{park2021unsupervised}.
Related work has shown that class-dependent partial sign codes can be extracted from trained ReLU networks and used to verify that the network assigns the same label to inputs consistent with those partial sign codes \cite{geng2023towards}.

This paper is especially close to the path- and gate-based literature.
Path-SGD and path-norm analyses emphasized geometries and complexity measures that respect ReLU rescaling symmetries better than raw Euclidean weight space \citep{neyshabur2015pathsgd,neyshabur2015norm}.
Lakshminarayanan and Singh introduced neural path features and the neural path kernel, explicitly encoding the active subnetwork selected by the gates \citep{lakshminarayanan2020neural}.
Fiat et al.\ made the separation between gating and linear computation explicit by decoupling the two roles in the neuron \citep{fiat2019decoupling}.
More recent work extended path-based analyses to modern DAG architectures and principled rescaling procedures \citep{gonon2024pathnorm,lebeurrier2026pathconditioned}.
Among existing lines, this is the closest antecedent to our thesis: we likewise elevate gates from a bookkeeping device to a primary object of analysis, but we go further by arguing for three complementary claims---informational sufficiency, causal dominance under intervention, and theoretical dominance of the sign code once magnitude residuals concentrate.

Our empirical results also connect to work on representation geometry in trained classifiers.
Visualization and representation-similarity studies show that networks that generalize organize their internal representations differently from networks that memorize \citep{huang2020visualizations,kornblith2019similarity}.
Neural collapse sharpened this picture by showing that late-layer features can converge toward class means and nearest-class-mean decision rules during terminal training phases \citep{papyan2020prevalence,han2022neural}.
Our sign-$k$NN and codebook-collapse analyses are complementary to this literature: they replace continuous Euclidean geometry by discrete Hamming geometry on sign codes, and ask whether class structure is already recoverable before or even without full collapse of the continuous feature cloud.

Finally, because the present paper studies sign-preserving activations and multiple backbone families, it also interfaces with the activation-function and architecture literature.
The experiments span residual networks and vision transformers \citep{he2016deep,dosovitskiy2021image}, and include GELU and SiLU/Swish-style smooth activations in addition to ReLU and tanh \citep{hendrycks2016gelu,ramachandran2018searching,elfwing2018silu}.
The redundancy and erasure-robustness story we observe is also adjacent to pruning, compression, and sparse-subnetwork results \citep{han2016deepcompression,frankle2019lottery}: heavily overparameterized networks often contain substantial slack.
The distinction is that pruning and compression typically operate in parameter space, whereas our thesis is about input-dependent routing structure.
The claim is not merely that many weights are removable, but that overparameterization can induce a simpler, more redundant binary partition whose class geometry increasingly governs prediction.

\section{Additional material for the main paper}
\subsection{Extension of sign extraction}
\label{appx_signextr}
\paragraph{Convolutional blocks.}
For a convolutional output
$\mathbf{h}^{(\ell)} \in \mathbb{R}^{C \times H \times W}$,
Definition~\ref{def:sign-mag} applies entrywise, yielding a
sign code tensor
$\mathbf{S}^{(\ell)} \in \{-1,+1\}^{C \times H \times W}$ that
records both \emph{which} channels and \emph{where} in space each
feature responds positively.
We compress the spatial dimensions by majority vote on a
$P{\times}P$ grid ($P{=}2$ throughout), giving $C \cdot P^2$ bits
per layer.

\paragraph{Residual blocks.}
A residual block adds a learned branch to a skip connection:
\begin{equation}
  \mathbf{h}^{(\ell+1)}
  = \varphi\!\Bigl(
      \mathbf{h}^{(\ell)}
      + \mathcal{F}^{(\ell)}\!\bigl(\mathbf{h}^{(\ell)}\bigr)
    \Bigr)\,,
  \label{eq:resblock}
\end{equation}
where $\mathcal{F}^{(\ell)}$ is a sub-network (typically two
convolutional or fully-connected layers with their own
activations).
Two families of sign codes arise.
The activations \emph{inside} the branch
$\mathcal{F}^{(\ell)}$ produce \emph{internal} sign codes,
computed exactly as in Definition~\ref{def:sign-mag}.
For a standard two-layer branch with intermediate activation, the
internal code is
$\mathbf{S}_{\mathrm{int}}^{(\ell)} =
\operatorname{sgn}\!\bigl(
\varphi(W_1^{(\ell)}\mathbf{h}^{(\ell)} +
\mathbf{b}_1^{(\ell)})\bigr)$.
The \emph{block-output} code
$\mathbf{S}_{\mathrm{out}}^{(\ell)} =
\operatorname{sgn}(\mathbf{h}^{(\ell+1)})$
records the sign of the \emph{sum} of skip and branch, and need
not equal $\mathbf{S}^{(\ell)}$: it is precisely the sign flips
introduced by the branch that allow each block to refine the
combinatorial partition of the preceding layer. The full sign code of an $L$-block residual network concatenates
both internal and output codes across all blocks:
\begin{equation}
  \mathbf{S}(x)
  \;=\;
  \bigl(\,
    \mathbf{S}_{\mathrm{int}}^{(1)},\;
    \mathbf{S}_{\mathrm{out}}^{(1)},\;
    \ldots,\;
    \mathbf{S}_{\mathrm{int}}^{(L)},\;
    \mathbf{S}_{\mathrm{out}}^{(L)}
  \,\bigr)
  \;\in\; \{-1,+1\}^{D}\,.
  \label{eq:resnet-full-code}
\end{equation}
Omitting either the internal or the output codes would lose
information about the routing decisions the network makes.

\subsection{Relationship to Neural Collapse}
\label{appx_NC}

Neural collapse \citep{papyan2020prevalence,han2022neural} describes a terminal-phase phenomenon in which (NC1) within-class feature variance vanishes relative to between-class variance, (NC2) class means converge to an equiangular tight frame, (NC3) the last-layer classifier aligns with the class means, and (NC4) a nearest-class-mean (NCM) classifier matches network accuracy.
A natural question is whether sign code sufficiency is simply a consequence of neural collapse: if features have already collapsed onto class means, any reasonable representation—including the sign code—would trivially separate classes.

We measure all four NC metrics across the full FFNN width–depth grid (SiLU, SVHN, 80 training epochs) in \cref{fig:nc_diagnostics}.
The models are far from the collapsed regime.
NC1 ($S_W/S_B$) remains above $1$ throughout most of the grid, indicating that within-class scatter still dominates between-class scatter; values below $10^{-1}$ would indicate meaningful collapse.
NC3 (cosine alignment between the classifier weights and the class-mean directions) stays below $0.4$ for most configurations, far from the ideal value of $1$.
NC4 (NCM test accuracy) peaks around $60$--$70\%$, substantially below the full model accuracy of ${\sim}80\%$ achieved by the same networks.
NC2 (ETF deviation) is near zero at large widths but this alone does not indicate collapse—it reflects only the angular structure of the class means, not the tightness of the feature clouds around them.

Despite this absence of collapse, the sign code is already informationally sufficient (\cref{sec:sufficient}) and causally dominant under intervention (\cref{sec:causal}) in precisely the same models.
Sign purity—the fraction of training points whose sign code is shared only with same-class examples—is high across much of the grid even where NC1 remains large and NC4 is poor (compare bottom-left and bottom-centre panels of \cref{fig:nc_diagnostics}).
Class structure is thus recoverable in discrete Hamming space well before the continuous features converge toward their class means.
This makes the sign code mechanism strictly more general than neural collapse: it does not require prolonged terminal-phase training, and it operates through combinatorial routing geometry rather than through concentration of continuous features.

\begin{figure}[htb!]
  \centering
  \includegraphics[width=\textwidth]{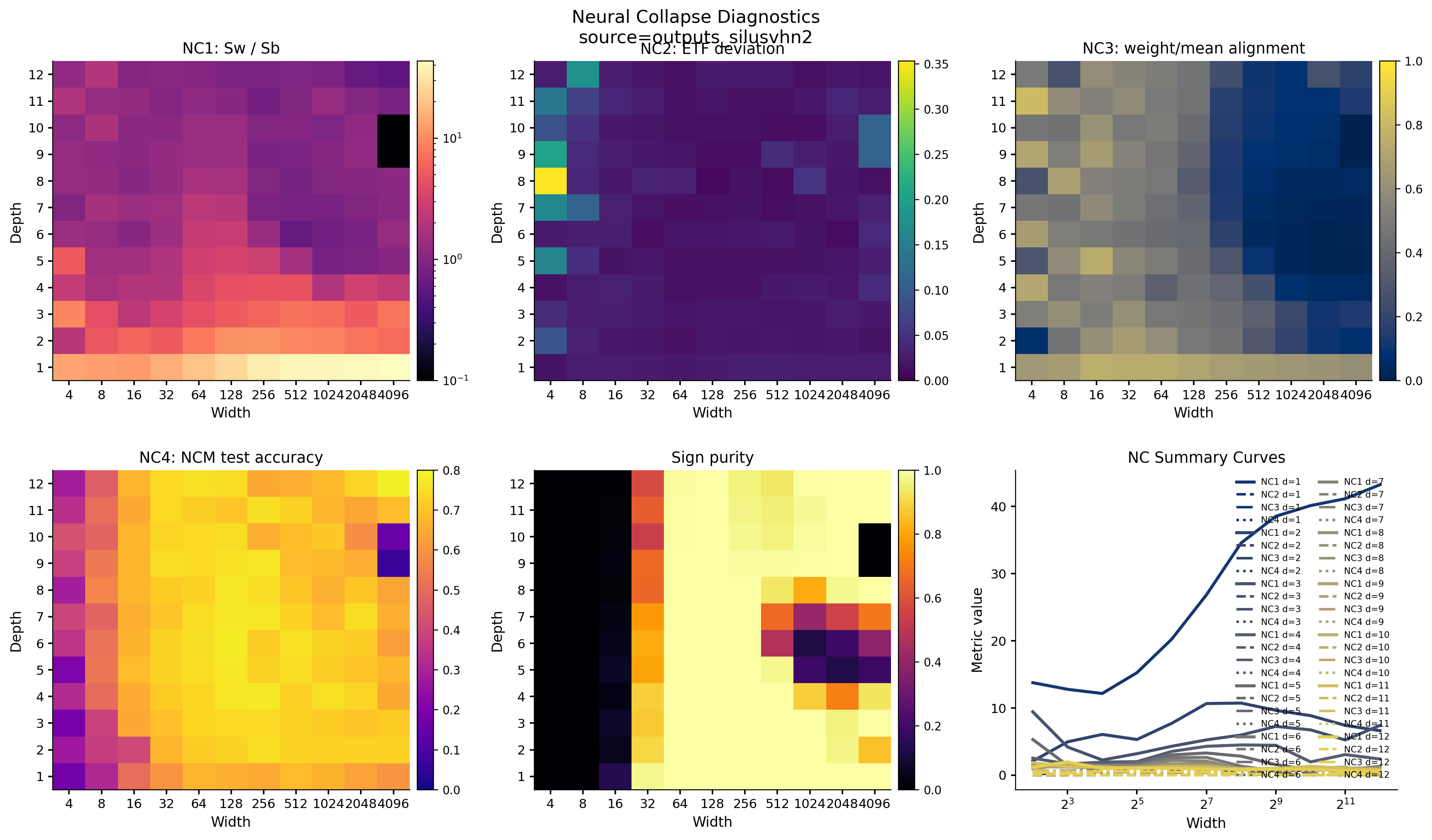}
  \caption{\textbf{Neural collapse diagnostics across the width–depth grid} (FFNN, SiLU, SVHN, 80 epochs).
  \textbf{Top row:} NC1 ($S_W/S_B$, log scale), NC2 (ETF deviation), NC3 (weight–mean cosine alignment).
  \textbf{Bottom row:} NC4 (nearest-class-mean test accuracy), sign purity, and NC summary curves across widths for selected depths.
  NC1 remains above $1$ and NC3 below $0.4$ across most configurations, confirming that the models have not undergone neural collapse.
  NCM test accuracy (NC4) peaks at ${\sim}70\%$, well below full model accuracy.
  Despite this, sign purity is high wherever the network is sufficiently wide and deep—class-discriminative sign code structure emerges independently of, and prior to, neural collapse.}
  \label{fig:nc_diagnostics}
\end{figure}

\section{Non-Sign-Preserving Activations}
\label{app:non-sign-preserving}

The sign-preserving property (Definition~\ref{def:sign-preserving}) covers the activations dominant in modern practice: ReLU, SiLU/Swish, GELU, and tanh are all sign-preserving and account for the vast majority of architectures in current use. The decomposition $h_j = S_j \cdot M_j$ with $S_j = \operatorname{sgn}(z_j)$ applies directly to all of them. We briefly discuss what happens outside this class.

\paragraph{Shifted-threshold activations (e.g.\ softplus).}
Softplus, $\varphi(z) = \log(1 + e^z)$, is strictly positive for all $z$ and therefore not sign-preserving: $\operatorname{sgn}(\varphi(z)) = +1$ always. However, softplus is monotone, so it preserves the ordering of pre-activations. A natural generalization is to replace the zero threshold with the activation's symmetry axis. For softplus, $\varphi(0) = \log 2$, and the function is approximately linear for $z \gg 0$ and approximately flat for $z \ll 0$. Defining $S_j = \operatorname{sgn}(z_j)$ directly—bypassing the activation output—recovers a meaningful binary partition: it still records which side of the neuron's learned hyperplane the input falls on. The magnitude can then be defined relative to this threshold, e.g.\ $M_j = |\varphi(z_j) - \varphi(0)|$. We have not tested this variant experimentally, but the monotonicity of softplus ensures that the partition induced by $\operatorname{sgn}(z_j)$ is identical to the one studied throughout the paper—only the magnitude profile changes.

\paragraph{Symmetric activations (e.g.\ $z^2$, $\cos(z)$).}
Squaring or other even functions destroy sign information entirely: $(-z)^2 = z^2$. The sign--magnitude decomposition as stated does not apply. Yet a binary routing question remains meaningful—one could ask whether $|z_j|$ exceeds some data-dependent threshold $\tau_j$ (e.g.\ the median pre-activation magnitude), partitioning inputs into high-activation and low-activation regimes. Whether such a partition carries class-discriminative structure comparable to the sign code is an open empirical question. We note that even activations like $z^2$ are rarely used in isolation in modern networks; they typically appear inside gated units (e.g.\ $z_1 \cdot z_2^2$ in some GLU variants) where the gating component restores a sign-preserving partition on at least one branch.

In summary, the sign-preserving assumption is not restrictive in practice: it covers all standard activations, and for the main exception (softplus), the pre-activation sign provides an equivalent partition. Extending the framework to genuinely non-monotone or even activations is possible in principle but would require a different choice of binary code, and we leave this to future work.

\section{Sign Code Extraction}
\label{app:extraction}

This appendix documents the precise extraction procedure for the
sign code $\mathbf{S}(x)$ used throughout the paper.
We describe the procedure separately for each architecture family
and specify exactly which tensors are recorded, which are
discarded, and how the full-network sign code is assembled.
All sign codes use the convention
$\operatorname{sgn}(t) = +1$ if $t > 0$ and $-1$ otherwise (in
particular, $\operatorname{sgn}(0) = -1$).

\subsection{Feedforward networks (FFNN)}
\label{app:extraction-ffnn}

\paragraph{Architecture.}
Our FFNN models are of the form
\[
  x
  \;\xrightarrow{\text{flatten}}\;
  \mathbf{h}^{(0)}
  \;\xrightarrow{W^{(1)},\,\varphi}\;
  \mathbf{h}^{(1)}
  \;\xrightarrow{W^{(2)},\,\varphi}\;
  \cdots
  \;\xrightarrow{W^{(L)},\,\varphi}\;
  \mathbf{h}^{(L)}
  \;\xrightarrow{W^{(L+1)}}\;
  \text{logits}\,,
\]
where every hidden layer applies the same sign-preserving
activation $\varphi$ (one of ReLU, SiLU, GELU, or tanh).
The model exposes an \texttt{nn.ModuleList} of
\texttt{nn.Linear} layers (\texttt{hidden\_layers}) and a final
\texttt{nn.Linear} output layer (\texttt{output\_layer}).

\paragraph{Extraction.}
We register a forward hook on each \texttt{nn.Linear} in
\texttt{hidden\_layers}.
The hook records the \emph{pre-activation}
$\mathbf{z}^{(\ell)} = W^{(\ell)}\mathbf{h}^{(\ell-1)} +
\mathbf{b}^{(\ell)}$, i.e.\ the output of the linear layer
\emph{before} the activation is applied.
After the forward pass completes, we apply $\varphi$ to each
captured pre-activation to obtain the post-activation
$\mathbf{h}^{(\ell)} = \varphi(\mathbf{z}^{(\ell)})$, and
compute:
\begin{align}
  \mathbf{S}^{(\ell)}(x)
  &= \operatorname{sgn}\!\bigl(\mathbf{h}^{(\ell)}(x)\bigr)
  \;\in\; \{-1,+1\}^{d_\ell}\,,
  \label{eq:app-ffnn-sign} \\
  \mathbf{M}^{(\ell)}(x)
  &= \bigl|\mathbf{h}^{(\ell)}(x)\bigr|
  \;\in\; \mathbb{R}_{\geq 0}^{d_\ell}\,.
  \label{eq:app-ffnn-mag}
\end{align}
Because $\varphi$ is sign-preserving,
$\mathbf{S}^{(\ell)} = \operatorname{sgn}(\mathbf{z}^{(\ell)})$,
so the sign code is equivalently determined by the pre-activation
signs.

\paragraph{Full sign code.}
The full-network sign code is the concatenation across all $L$
hidden layers:
\[
  \mathbf{S}(x)
  = \bigl(\mathbf{S}^{(1)}(x),\;\ldots,\;\mathbf{S}^{(L)}(x)\bigr)
  \;\in\; \{-1,+1\}^D\,,
  \qquad
  D = \textstyle\sum_{\ell=1}^{L} d_\ell\,.
\]
The output layer $W^{(L+1)}$ is not activated and is therefore
excluded from the sign code.

\paragraph{What is \emph{not} extracted.}
The output logits are recorded for evaluation but do not
contribute to the sign code.
No sign code is computed for the input $\mathbf{h}^{(0)} = x$
(there is no activation to decompose).

\subsection{Residual networks (ResNet)}
\label{app:extraction-resnet}

\paragraph{Architecture.}
We use torchvision \texttt{BasicBlock}-based ResNets (ResNet-18,
ResNet-34).
The architecture comprises a convolutional stem followed by four
layer groups, each containing multiple residual blocks, followed
by global average pooling and a linear classifier:
\[
  x
  \;\xrightarrow[\text{stem}]{\text{conv1, bn1, relu, maxpool}}\;
  \mathbf{h}_{\mathrm{stem}}
  \;\xrightarrow{\text{layer1}}\;
  \cdots
  \;\xrightarrow{\text{layer4}}\;
  \mathbf{h}_{\mathrm{final}}
  \;\xrightarrow{\text{avgpool, fc}}\;
  \text{logits}\,.
\]
Each \texttt{BasicBlock} has the following internal structure:
\[
  \mathbf{h}^{(\ell)}
  \;\xrightarrow{\text{conv1, bn1}}\;
  \mathbf{z}_{\mathrm{int}}
  \;\xrightarrow{\text{relu}}\;
  \mathbf{a}
  \;\xrightarrow{\text{conv2, bn2}}\;
  \mathbf{b}
  \;\xrightarrow{+\;\mathrm{skip}}\;
  \mathbf{z}_{\mathrm{out}}
  \;\xrightarrow{\text{relu}}\;
  \mathbf{h}^{(\ell+1)}\,,
\]
where the skip connection is the identity when dimensions match,
or a learned $1{\times}1$ convolution with batch normalisation
(\texttt{downsample}) when a spatial or channel-dimension change
occurs.
Note that torchvision \texttt{BasicBlock} uses a \emph{single}
\texttt{nn.ReLU} module that is called twice during the forward
pass: once after \texttt{bn1} (internal) and once after the
residual addition (output).

\paragraph{Extraction.}
We register five hooks per block and one hook on the stem:

\begin{enumerate}[leftmargin=1.5em, itemsep=2pt]
\item \textbf{Stem activation.}\;
  A forward hook on \texttt{model.relu} captures
  $\mathbf{h}_{\mathrm{stem}} =
  \mathrm{ReLU}(\mathrm{bn1}(\mathrm{conv1}(x)))$, the stem
  output after the first ReLU (but before max-pooling).
  We record
  $\mathbf{S}_{\mathrm{stem}} =
  \operatorname{sgn}(\mathbf{h}_{\mathrm{stem}})$ and
  $\mathbf{M}_{\mathrm{stem}} =
  |\mathbf{h}_{\mathrm{stem}}|$.

\item \textbf{Block input.}\;
  A forward \emph{pre}-hook on the block module captures the
  input tensor $\mathbf{h}^{(\ell)}$ entering the block.
  This serves as the skip connection when no downsampling is
  needed.

\item \textbf{Internal branch activation.}\;
  A forward hook on \texttt{block.relu} captures every ReLU
  output inside the block.
  In a \texttt{BasicBlock} the shared ReLU is called exactly
  twice; the \emph{first} output is the internal branch
  activation
  $\mathbf{a} =
  \mathrm{ReLU}(\mathrm{bn1}(\mathrm{conv1}(\mathbf{h}^{(\ell)})))$.
  We record
  $\mathbf{S}_{\mathrm{int}}^{(\ell)} =
  \operatorname{sgn}(\mathbf{a})$ and
  $\mathbf{M}_{\mathrm{int}}^{(\ell)} = |\mathbf{a}|$.

\item \textbf{Branch output (pre-addition).}\;
  A forward hook on \texttt{block.bn2} captures
  $\mathbf{b} = \mathrm{bn2}(\mathrm{conv2}(\mathbf{a}))$, the
  output of the residual branch before addition with the skip.

\item \textbf{Skip connection.}\;
  If the block has a \texttt{downsample} module, a forward hook
  captures its output.
  Otherwise, the skip is the block input $\mathbf{h}^{(\ell)}$
  recorded in step~2.

\item \textbf{Residual sum and block output.}\;
  We compute the residual sum
  $\mathbf{z}_{\mathrm{out}} = \mathbf{b} + \mathrm{skip}$
  from the tensors captured in steps 4 and~5.
  This is the pre-activation of the block's output ReLU.
  The block-output activation
  $\mathbf{h}^{(\ell+1)} =
  \mathrm{ReLU}(\mathbf{z}_{\mathrm{out}})$
  is captured via a forward hook on the block module itself.
  We record
  $\mathbf{S}_{\mathrm{out}}^{(\ell)} =
  \operatorname{sgn}(\mathbf{h}^{(\ell+1)})$ and
  $\mathbf{M}_{\mathrm{out}}^{(\ell)} =
  |\mathbf{h}^{(\ell+1)}|$.
  Because ReLU is sign-preserving, $\mathbf{S}_{\mathrm{out}}^{(\ell)}
  = \operatorname{sgn}(\mathbf{z}_{\mathrm{out}})$; we
  additionally store $\operatorname{sgn}(\mathbf{z}_{\mathrm{out}})$
  and $|\mathbf{z}_{\mathrm{out}}|$ for diagnostic use.
\end{enumerate}

\paragraph{Full sign code.}
The full-network sign code concatenates the stem code and both the
internal and output codes from every block:
\[
  \mathbf{S}(x)
  = \bigl(\,
    \mathbf{S}_{\mathrm{stem}},\;
    \mathbf{S}_{\mathrm{int}}^{(1)},\;
    \mathbf{S}_{\mathrm{out}}^{(1)},\;
    \ldots,\;
    \mathbf{S}_{\mathrm{int}}^{(B)},\;
    \mathbf{S}_{\mathrm{out}}^{(B)}
  \,\bigr)\,,
\]
where $B$ is the total number of \texttt{BasicBlock}s across all
four layer groups (e.g.\ $B=8$ for ResNet-18).
For convolutional layers the sign code tensors have shape
$C \times H \times W$; we compress each to $C \cdot P^2$ bits
using majority-vote spatial pooling on a $P{\times}P$ grid
($P=2$).

\paragraph{What is \emph{not} extracted.}
The global average-pooling layer and the final linear classifier
(\texttt{fc}) are not activated by a sign-preserving nonlinearity
and do not contribute to the sign code.
The second ReLU output per block (captured by the shared
\texttt{block.relu} hook) is functionally identical to the
block-output activation captured by the block-level hook; we use
the latter and discard the former.
The stem's max-pooling layer is applied \emph{after} the stem
ReLU and does not introduce a new sign decision, so no separate
code is extracted for it.

\subsection{Vision Transformer (ViT)}
\label{app:extraction-vit}

\paragraph{Architecture.}
Our ViT follows the standard pre-norm design:
\[
  x
  \;\xrightarrow{\text{patch embed}}\;
  \mathbf{t}
  \;\xrightarrow{+\;\text{cls, pos}}\;
  \mathbf{h}^{(0)}
  \;\xrightarrow{\text{block}_1}\;
  \cdots
  \;\xrightarrow{\text{block}_B}\;
  \mathbf{h}^{(B)}
  \;\xrightarrow{\text{norm, head}}\;
  \text{logits}\,,
\]
where each transformer block has the structure
\begin{align*}
  \mathbf{h}_{\mathrm{attn}}^{(\ell)}
  &= \mathbf{h}^{(\ell-1)}
     + \mathrm{Attn}\!\bigl(
       \mathrm{LN}_1(\mathbf{h}^{(\ell-1)})
     \bigr)\,, \\
  \mathbf{h}^{(\ell)}
  &= \mathbf{h}_{\mathrm{attn}}^{(\ell)}
     + \mathrm{MLP}\!\bigl(
       \mathrm{LN}_2(\mathbf{h}_{\mathrm{attn}}^{(\ell)})
     \bigr)\,.
\end{align*}
The MLP sub-block is
$\mathrm{Linear} \to \mathrm{GELU} \to \mathrm{Dropout} \to
\mathrm{Linear} \to \mathrm{Dropout}$.

\paragraph{Extraction.}
Unlike the FFNN and ResNet extractors, the ViT extractor performs
a \emph{manual forward pass} through the model's modules rather
than relying on hooks.
This gives explicit access to every intermediate tensor.
For each transformer block $\ell = 1,\ldots,B$, we record three
sign--magnitude pairs:

\begin{enumerate}[leftmargin=1.5em, itemsep=2pt]
\item \textbf{Post-attention residual stream.}\;
  $\mathbf{h}_{\mathrm{attn}}^{(\ell)} =
  \mathbf{h}^{(\ell-1)} + \mathrm{Attn}(\mathrm{LN}_1(
  \mathbf{h}^{(\ell-1)}))$.
  We record
  $\mathbf{S}_{\mathrm{attn}}^{(\ell)} =
  \operatorname{sgn}(\mathbf{h}_{\mathrm{attn}}^{(\ell)})$
  and
  $\mathbf{M}_{\mathrm{attn}}^{(\ell)} =
  |\mathbf{h}_{\mathrm{attn}}^{(\ell)}|$.
  This is the sign of the residual stream \emph{after} the
  attention sub-block has added its contribution.

\item \textbf{MLP hidden activation.}\;
  $\mathbf{a}^{(\ell)} =
  \mathrm{GELU}\!\bigl(
  W_1^{(\ell)}\,\mathrm{LN}_2(
  \mathbf{h}_{\mathrm{attn}}^{(\ell)}) + \mathbf{b}_1^{(\ell)}
  \bigr)$.
  We record
  $\mathbf{S}_{\mathrm{mlp}}^{(\ell)} =
  \operatorname{sgn}(\mathbf{a}^{(\ell)})$
  and
  $\mathbf{M}_{\mathrm{mlp}}^{(\ell)} =
  |\mathbf{a}^{(\ell)}|$.
  This is the internal sign code of the MLP, analogous to the
  internal branch code in a ResNet block.
  GELU is sign-preserving, so
  $\mathbf{S}_{\mathrm{mlp}}^{(\ell)}$ is determined by the
  pre-activation signs.

\item \textbf{Block output.}\;
  $\mathbf{h}^{(\ell)} =
  \mathbf{h}_{\mathrm{attn}}^{(\ell)} +
  \mathrm{MLP}(\mathrm{LN}_2(
  \mathbf{h}_{\mathrm{attn}}^{(\ell)}))$.
  We record
  $\mathbf{S}_{\mathrm{out}}^{(\ell)} =
  \operatorname{sgn}(\mathbf{h}^{(\ell)})$
  and
  $\mathbf{M}_{\mathrm{out}}^{(\ell)} =
  |\mathbf{h}^{(\ell)}|$.
  As with ResNets, the output sign code need not equal the
  input sign code: sign flips introduced by the MLP branch
  refine the partition.
\end{enumerate}

\noindent
Additionally, we record the patch-embedding tokens
$\mathbf{t} = \mathrm{PatchEmbed}(x)$ and their sign and
magnitude.
These serve as the ``stem'' of the ViT and are available for
diagnostic use, though they do not pass through a sign-preserving
activation and are therefore not included in the default sign
code.

\paragraph{Full sign code.}
The full-network sign code concatenates all three code families
across all $B$ transformer blocks:
\[
  \mathbf{S}(x)
  = \bigl(\,
    \mathbf{S}_{\mathrm{attn}}^{(1)},\;
    \mathbf{S}_{\mathrm{mlp}}^{(1)},\;
    \mathbf{S}_{\mathrm{out}}^{(1)},\;
    \ldots,\;
    \mathbf{S}_{\mathrm{attn}}^{(B)},\;
    \mathbf{S}_{\mathrm{mlp}}^{(B)},\;
    \mathbf{S}_{\mathrm{out}}^{(B)}
  \,\bigr)\,.
\]
Each sign code tensor has shape
$(\text{batch},\; T,\; d_{\mathrm{model}})$ for the residual-stream
codes and
$(\text{batch},\; T,\; d_{\mathrm{mlp}})$ for the MLP hidden
code, where $T$ is the sequence length (number of patches plus
one \texttt{[CLS]} token).
For the experiments in the main paper we flatten over the token
dimension.

\paragraph{What is \emph{not} extracted.}
The self-attention weights $A \in \Delta_T^T$ are continuous
routing variables and do not admit a binary sign decomposition;
they are not included in the sign code.
The final layer norm and the classification head do not involve a
sign-preserving activation and are excluded.

\subsection{Spatial compression of convolutional sign codes}
\label{app:spatial-compression}

Convolutional sign code tensors
$\mathbf{S}^{(\ell)} \in \{-1,+1\}^{C \times H \times W}$ can
be very high-dimensional (e.g.\
$256 \times 8 \times 8 = 16{,}384$ bits for a single layer of
ResNet-18 on $32 \times 32$ inputs).
To obtain a fixed-length code that is comparable across spatial
resolutions, we partition each channel's $H \times W$ feature map
into a $P \times P$ grid of (approximately) equal patches and
replace each patch by its majority sign:
\[
  \bar{S}^{(\ell)}_{c,p,q}
  = \operatorname{sgn}\!\Bigl(
    \textstyle\sum_{(h,w) \in \mathrm{patch}(p,q)}
    S^{(\ell)}_{c,h,w}
  \Bigr)\,.
\]
This yields $C \cdot P^2$ bits per layer.
We use $P = 2$ throughout, which balances spatial resolution
against code dimensionality.
The majority-vote rule means that $\bar{S}^{(\ell)}_{c,p,q} = +1$
iff more than half of the spatial locations in the patch have a
positive activation, preserving the dominant sign in each spatial
neighbourhood.

\subsection{Random projection for high-dimensional codes}
\label{app:random-projection}

Even after spatial compression, the full-network sign code can
have $D \sim 10^4$--$10^5$ dimensions.
Computing exact Hamming distances between all pairs of $n$ points
in $\{-1,+1\}^D$ costs $\mathcal{O}(n^2 D)$, which becomes
prohibitive for large $n$ and $D$.
We reduce the dimensionality by projecting the sign code with a
random binary matrix
$R \in \{-1,+1\}^{D \times d}$ (entries i.i.d.\
Rademacher, $d \ll D$) and re-binarising:
\[
  \tilde{\mathbf{S}}(x)
  = \operatorname{sgn}\!\bigl(\mathbf{S}(x)\, R\bigr)
  \;\in\; \{-1,+1\}^d\,.
\]
By standard results on random hyperplane hashing, the normalised
Hamming distance between two projected codes satisfies
\(
  \mathbb{E}\bigl[d_H(\tilde{\mathbf{S}},\,
  \tilde{\mathbf{S}}')\bigr]
  = \tfrac{1}{\pi}\arccos\!\bigl(
  \cos(\theta)\bigr)
\)
where $\theta$ is the angle between the original sign vectors
treated as real-valued vectors, providing a monotone approximation
to the original Hamming distance.
In all experiments labelled ``projected'' in the main paper, we
use $d = 2{,}048$.

\subsection{Summary of extracted quantities}
\label{app:extraction-summary}

Table~\ref{tab:extraction-summary} summarises which tensors are
recorded for each architecture and how they contribute to the
full sign code.

\begin{table}[h]
  \centering
  \caption{%
    Summary of sign code extraction across architectures.
    \emph{Int.} = internal branch/MLP hidden code;
    \emph{Out.} = block-output code;
    \emph{Attn.} = post-attention residual stream code.
    ``Full $\mathbf{S}$'' indicates inclusion in the full-network
    sign code; indicates the code
    used for the last-layer sign-space $k$-NN experiment in
    Section~\ref{sec:sufficient}.
  }
  \label{tab:extraction-summary}
  \smallskip
  \renewcommand{\arraystretch}{1.2}
  \begin{tabular}{@{}lllcc@{}}
    \toprule
    \textbf{Architecture}
      & \textbf{Tensor}
      & \textbf{Code type}
      & \textbf{Full $\mathbf{S}$}
      & \textbf{\S\cref{sec:sufficient}} \\
    \midrule
    \multirow{2}{*}{FFNN}
      & $\mathbf{h}^{(\ell)} = \varphi(\mathbf{z}^{(\ell)})$
      & Layer sign
      & \checkmark
      & last only \\
      & Output logits
      & ---
      & $\times$
      & $\times$ \\
    \midrule
    \multirow{4}{*}{ResNet}
      & Stem ReLU output
      & Stem
      & \checkmark
      & $\times$ \\
      & $\mathbf{a} = \mathrm{ReLU}(\mathrm{bn1}(\mathrm{conv1}(\cdot)))$
      & Int.\
      & \checkmark
      & $\times$ \\
      & $\mathbf{h}^{(\ell+1)} = \mathrm{ReLU}(\mathbf{b} + \mathrm{skip})$
      & Out.\
      & \checkmark
      & last only \\
      & Avgpool / fc
      & ---
      & $\times$
      & $\times$ \\
    \midrule
    \multirow{5}{*}{ViT}
      & Patch tokens (pre-transformer)
      & Stem
      & $\times$
      & $\times$ \\
      & $\mathbf{h}_{\mathrm{attn}}^{(\ell)}$
      & Attn.\
      & \checkmark
      & $\times$ \\
      & $\mathbf{a}^{(\ell)} = \mathrm{GELU}(\cdot)$
      & Int.\
      & \checkmark
      & $\times$ \\
      & $\mathbf{h}^{(\ell)}$ (block output)
      & Out.\
      & \checkmark
      & last only \\
      & Final norm / head
      & ---
      & $\times$
      & $\times$ \\
    \bottomrule
  \end{tabular}
\end{table}

\paragraph{Which codes are used where.}
The informational sufficiency experiment
(\cref{sec:sufficient}) uses
only the \emph{last} hidden layer's sign code: the final hidden
layer for FFNNs, the output code of the last
\texttt{BasicBlock} for ResNets, and the output code of the last
transformer block for ViTs.
This is a deliberately conservative choice: a single layer's
binary code---a small fraction of the full-network sign
code---already accounts for the network's generalization
performance.
The full-network sign code (all layers concatenated) is used in
later sections where tighter certificates or mechanistic
decompositions require the complete routing record.

\section{Training Details}
\label{appx_traindetail}
\subsection{Training Details \& Hyperparameters}
\label{app:training}

\paragraph{Scope.}
This appendix summarizes the training protocol used for the main
correlation sweeps, the intervention experiments, and the
repulsion-regularized runs.

\paragraph{Baseline sweeps.}
The main width--depth heatmaps use CIFAR-10 with the full
$50{,}000$-example training set. The cross-dataset scatter in
Figure~\ref{fig:sufficiency}b uses three architecture families
(FFNN, ResNet-18, ViT), four activations (ReLU, SiLU, GELU, tanh),
three datasets (MNIST, CIFAR-10, CIFAR-100), five training-set sizes
($1$k, $5$k, $10$k, $25$k, $50$k), and three random seeds, for a
total of $540$ trained models.

\paragraph{Optimization.}
\begin{itemize}
\item \textbf{Loss:} cross-entropy throughout; label smoothing is
  zero by default, with ViT on CIFAR-10 using $0.1$.
\item \textbf{Optimizers:} AdamW for FFNN and ViT; SGD with Nesterov
  momentum for ResNet-18.
\item \textbf{Base learning rates:} FFNN uses base learning rate
  $3\times 10^{-4}$; ResNet-18 uses $10^{-1}$; ViT uses
  $1.5\times10^{-4}$ on CIFAR-10.
\item \textbf{Scaling rule:} for FFNN sweeps, the effective learning
  rate is
  \[
  \mathrm{lr}_{\mathrm{eff}}
  = 3\times10^{-4}\cdot
  \mathrm{clip}\!\left(
  \left(\frac{1024}{w}\right)^{1/2}
  \left(\frac{6}{d}\right)^{1/2},
  0.1,1.0\right),
  \]
  with an additional factor $0.2$ for tanh. For ViT sweeps, the
  effective learning rate is
  \[
  \mathrm{lr}_{\mathrm{eff}}
  = 1.5\times10^{-4}\cdot
  \mathrm{clip}\!\left(
  \left(\frac{1024}{w}\right)
  \left(\frac{6}{d}\right)^{1/2},
  0.125,1.0\right).
  \]
\item \textbf{Schedules:} FFNN uses a constant learning rate in the
  baseline sweeps. ResNet-18 and ViT use cosine decay; ResNet-18 uses
  5 warmup epochs and ViT uses 10 warmup epochs.
\item \textbf{Batch sizes:} 256 for FFNN, 64 for ResNet-18, and 128
  for ViT.
\item \textbf{Weight decay:} $10^{-4}$ for FFNN, $5\times10^{-4}$
  for ResNet-18, and $5\times10^{-2}$ for ViT.
\item \textbf{Mixed precision:} enabled for all large sweeps unless
  explicitly disabled.
\end{itemize}

\paragraph{Stopping rule.}
The main heatmap sweeps use a maximum of 120 epochs together with a
practical early-stop rule based on training accuracy:
$0.95$ for FFNN, $0.995$ for ResNet-18, and $0.98$ for ViT.
These thresholds were used as a compute-saving heuristic once the
model had clearly fit the training set; they were not used in the
repulsion runs, where late-training dynamics are part of the result.

\paragraph{Data and augmentation.}
\begin{itemize}
\item \textbf{Splits:} standard train/test splits; no held-out
  validation split.
\item \textbf{Augmentation:} ResNet-18 and ViT use
  RandomCrop$(32,\mathrm{padding}=4)$ and RandomHorizontalFlip on
  CIFAR; horizontal flips are disabled for SVHN. FFNN uses no data
  augmentation.
\item \textbf{Normalization:} standard per-dataset mean/std
  normalization.
\end{itemize}

\paragraph{Initialization and normalization.}
FFNN layers use Kaiming initialization for ReLU/SiLU/GELU and Xavier
initialization for tanh. ResNet-18 uses the standard convolutional
and batch-normalized residual-block parameterization. ViT uses the
standard pre-norm transformer parameterization with layer norm.

\paragraph{Seeds and counts.}
The fixed-size cross-dataset sweep uses three seeds per configuration.
The main width--depth heatmaps use one seed per grid point. The
CIFAR-10 heatmaps in Figure~\ref{fig:sufficiency}a comprise
132 FFNN models, 108 ResNet-18 models, and 113 ViT models
(353 total).

\paragraph{Intervention experiments.}
The sign-flip and magnitude-replacement experiments use corruption
rates
\[
0,\ 0.01,\ 0.02,\ 0.05,\ 0.1,\ 0.15,\ 0.2,\ 0.25,\ 0.3,\ 0.4,\ 0.5,\
0.6,\ 0.7,\ 0.8,\ 0.9,\ 1.0,
\]
with five independent corruption masks per setting, applied to all
hidden layers.

\paragraph{Repulsion runs.}
The repulsion experiments are trained on SVHN with longer schedules
(maximum 2500 epochs), AdamW, the same FFNN learning-rate rule, and
no early stopping.
All other baseline hyperparameters are kept fixed unless explicitly
overridden by the regularizer.

\paragraph{Compute budget.}
The full project required approximately \textbf{400 H100 GPU-hours}
on a 4$\times$H100 cluster, including the main sweeps, intervention
experiments, repulsion reruns, and auxiliary failed/restarted jobs.
This number should be understood as a rough end-to-end estimate
rather than an exact accounting.

\subsection{Architecture Specifications}
\label{app:arch}

\paragraph{FFNN.}
Inputs are flattened to vectors. All hidden layers share the same
width $w$, and the depth $d$ equals the number of hidden layers.
The activation is ReLU, SiLU, GELU, or tanh, followed by a linear
classifier. There is no final normalization layer before the output
head.

\paragraph{ResNet-18.}
We use a CIFAR-style residual network rather than the ImageNet
torchvision stem: the input stem is a $3\times 3$ convolution with
stride 1 and no max-pooling. Residual blocks use batch normalization,
and the residual branch is scaled by $\beta/\sqrt{B}$ with
$\beta=1$ and $B$ the total number of residual blocks. Channel widths
follow $(1,2,4,8)\times(\text{base width})$, scaled uniformly by the
chosen width multiplier.

\paragraph{Depth allocation for ResNet-18.}
When the total residual depth is varied, blocks are distributed across
the four stages by round-robin allocation. Concretely, depth
$d=5$ gives stage layout $(2,1,1,1)$, depth $d=6$ gives
$(2,2,1,1)$, and depth $d=10$ gives $(3,3,2,2)$.

\paragraph{ViT.}
The ViT models use patch size 4, a learned class token, learned
positional embeddings, pre-norm transformer blocks, multi-head
self-attention with 8 heads, an MLP ratio of 4, and a linear
classifier on the \texttt{[CLS]} token. Width controls the embedding
dimension and depth controls the number of transformer blocks.

\paragraph{Sweep grids.}
\begin{itemize}
\item \textbf{FFNN (Figure~\ref{fig:sufficiency}a):}
  widths $\{4,8,16,32,64,128,256,512,1024,2048,4096\}$ and
  depths $\{1,\dots,12\}$.
\item \textbf{ResNet-18 (Figure~\ref{fig:sufficiency}a):}
  widths $\{1,2,4,8,16,32,64,128,256\}$ and depths $\{1,\dots,12\}$.
\item \textbf{ViT (Figure~\ref{fig:sufficiency}a):}
  widths $\{8,16,32,64,128,256,512,1024,2048,4096\}$ and
  depths $\{1,\dots,12\}$, with the seven largest models
  $(4096,6\text{--}12)$ omitted.
\item \textbf{Purity/repulsion grid (Figure~\ref{fig:gen}):}
  FFNN widths $\{4,16,64,256,512,1024\}$ and
  depths $\{1,2,4,6,8,10\}$; ResNet-18 widths
  $\{1,4,16,32,64,128\}$ and depths $\{1,3,5,7,8,9,10\}$.
\end{itemize}

\paragraph{Reference parameter counts.}
For CIFAR-sized inputs, representative models have:
\begin{itemize}
\item \textbf{FFNN $w{=}1024,d{=}8$:} 10,504,202 parameters.
\item \textbf{ResNet-18 $w{=}16,d{=}8$:} 701,466 parameters.
\item \textbf{ViT $w{=}256,d{=}6$:} 4,771,082 parameters.
\end{itemize}

\subsection{Repulsion Regularizer}
\label{app:repulsion}

\paragraph{Definition.}
For a batch with labels $y$, we form a continuous proxy for the sign
or gate pattern by concatenating selected pre-activations across
layers, applying a temperature-scaled proxy,
$g=\sigma(z/\tau)$ for ReLU/SiLU/GELU and $g=\tanh(z/\tau)$ for tanh,
and then $\ell_2$-normalizing. The repulsion loss penalizes
within-class alignment:
\[
\mathcal{L}_{\mathrm{rep}} =
\frac{1}{|\mathcal{P}|}\sum_{(i,j)\in\mathcal{P}} (g_i^\top g_j)^2,
\qquad
\mathcal{P}=\{(i,j): i\neq j,\ y_i=y_j\}.
\]
The total loss is
\[
\mathcal{L}=\mathcal{L}_{\mathrm{CE}}+\lambda(t)\,
\mathcal{L}_{\mathrm{rep}}.
\]

\paragraph{Schedule and layers.}
The regularization weight is linearly ramped from zero:
\[
\lambda(t)=\lambda_0\cdot
\min\!\Bigl(1,\max\bigl(0,\tfrac{t-t_0}{T}\bigr)\Bigr).
\]
For the Figure~\ref{fig:gen} runs, $t_0=5$ epochs and $T=200$.
The proxy is computed from the pre-activations of all FFNN hidden
layers.

\paragraph{Training protocol.}
The repulsion runs keep the baseline optimizer and data pipeline, but
use a long schedule (2500 epochs) with no early stopping so that the
late-stage collapse in test accuracy is visible. In the main SVHN
experiments, the regularizer uses $\tau=0.005$ and
$\lambda_0=500$.

\paragraph{Sensitivity.}
The effect is not confined to a single regularization strength: in
additional SVHN reruns, both $\lambda_0=250$ and $\lambda_0=500$
preserved near-perfect training accuracy while producing the same
qualitative collapse in test accuracy and purity.

\begin{algorithm}[t]
\caption{Repulsion-regularized training}
\label{alg:repulsion}
\begin{algorithmic}[1]
\For{each minibatch $(x,y)$}
  \State compute logits and hidden pre-activations $\{z^{(\ell)}\}$
  \State form proxy features $g \gets \mathrm{concat}_{\ell}\,
    \phi(z^{(\ell)}/\tau)$, where $\phi=\sigma$ or $\tanh$
  \State normalize each $g_i \gets g_i / (\|g_i\|_2 + \varepsilon)$
  \State form same-class pairs
    $\mathcal{P}=\{(i,j): i\neq j,\ y_i=y_j\}$
  \State compute
    $\mathcal{L}_{\mathrm{rep}} =
    |\mathcal{P}|^{-1}\sum_{(i,j)\in\mathcal{P}}(g_i^\top g_j)^2$
  \State compute
    $\mathcal{L}=\mathcal{L}_{\mathrm{CE}}+\lambda(t)\mathcal{L}_{\mathrm{rep}}$
  \State backpropagate $\mathcal{L}$ and update parameters
\EndFor
\end{algorithmic}
\end{algorithm}

\paragraph{Regularizer-specific hyperparameters.}
\begin{itemize}
\item \textbf{Weight:} $\lambda_0=500$ in the main runs.
\item \textbf{Temperature:} $\tau=0.005$.
\item \textbf{Normalization constant:} $\varepsilon=10^{-8}$.
\item \textbf{Proxy:} sigmoid for ReLU/SiLU/GELU; tanh for tanh.
\item \textbf{Layers:} all hidden layers.
\end{itemize}

\subsection{Sign-kNN Protocol \& Sensitivity}
\label{app:knn}

\paragraph{Protocol.}
Sign-$k$NN is computed from the last-layer sign code. Codes are stored
as $\{-1,+1\}$ vectors and compared by dot product, which is
equivalent to Hamming distance up to an affine transformation. The
main-paper sweeps use a fixed $k=15$ with majority-vote prediction and
deterministic tie-breaking by class count. Training examples form the
reference bank and test examples are used as queries.

\paragraph{Random projection.}
When projected sign-$k$NN is reported, the code is first mapped by a
random $\{\pm1\}$ projection to dimension $d=512$ and averaged over
five independent projections. The main sign-$k$NN results in the paper
use the full sign code rather than the projected code.

\paragraph{Sensitivity to $k$.}
To check that the main conclusions are not sensitive to the choice of
$k$, we re-evaluated three overparameterized SVHN SiLU FFNN
checkpoints using a 20k-example training bank and 5k test queries.
The table reports \emph{accuracy} (not error rate):

\begin{center}
\begin{tabular}{lcccccc}
\toprule
Model & model acc. & $k{=}1$ & $k{=}3$ & $k{=}5$ & $k{=}10$ & $k{=}20$ \\
\midrule
$w{=}256,d{=}8$  & 0.817 & 0.799 & 0.807 & 0.808 & 0.809 & 0.809 \\
$w{=}1024,d{=}8$ & 0.826 & 0.777 & 0.770 & 0.780 & 0.780 & 0.778 \\
$w{=}1024,d{=}10$& 0.830 & 0.823 & 0.826 & 0.825 & 0.826 & 0.825 \\
\bottomrule
\end{tabular}
\end{center}

Across the regime where sign-$k$NN is already close to the network
accuracy, performance varies only weakly with $k$; changing $k$ from
1 to 20 alters accuracy by at most a few points. The
$w{=}1024,d{=}8$ row sits near the transition region, where the sign
gap is still visible but already small enough to preserve the same
qualitative conclusion.

\section{The Canonical Gauge, the Logit Decomposition, and the
Gate Domination Bound}
\label{app:gauge}

This appendix provides a self-contained derivation of the logit
decomposition (Eq.~\ref{eq:logit-decomp}), explains why the
conditional mean magnitude $\mu_j(s)$ is the unique optimal choice
of reference magnitude---the \emph{canonical gauge}---and gives
the full proof of the gate domination bound
(Theorem~\ref{thm:gate-dom}).

\subsection{Setup and notation}
\label{app:gauge-setup}

Recall from Definition~\ref{def:sign-mag} that every neuron $j$ in
the last hidden layer produces an output
\[
  h_j(x) \;=\; S_j(x)\;\cdot\;M_j(x),
  \qquad
  S_j(x) = \operatorname{sgn}\!\bigl(h_j(x)\bigr) \in \{-1,+1\},
  \quad
  M_j(x) = |h_j(x)| \ge 0.
\]

\textbf{Sign convention.}
We adopt $\operatorname{sgn}(0) = -1$ throughout.  For ReLU, this
assigns the entire inactive region $\{x : z_j(x) \le 0\}$ to the
negative sign ($S_j = -1$), so $S_j \in \{-1,+1\}$ is well defined
for every input.  For smooth sign-preserving activations (SiLU,
GELU, tanh), only the boundary $\{z_j = 0\}$ is affected by this
convention; under an absolutely continuous pre-activation
distribution, this set has probability zero.

The class-$c$ logit is
\begin{equation}\label{eq:logit-raw}
  f_c(x)
  \;=\;
  \sum_{j=1}^{d} w_{cj}\,S_j(x)\,M_j(x) \;+\; b_c\,.
\end{equation}
We write $\mathbf{S}(x) = \bigl(S_1(x),\dots,S_{d}(x)\bigr)
\in \{-1,+1\}^{d}$ for the full last-layer sign code, where $d$
is the last hidden layer width.

Our goal is to split \eqref{eq:logit-raw} into a part that depends
\emph{only} on $\mathbf{S}(x)$ and a remainder.  Any such split
requires choosing, for each neuron~$j$, a \emph{reference magnitude}
$\nu_j(s)$ that stands in for $M_j$ when we know only that
$S_j = s$.  Different choices of $\nu_j$ yield different
decompositions; the question is which choice is best.

\textbf{Tie-breaking convention.}
Throughout this appendix, both the full classifier
$\hat f(x) = \arg\max_c f_c(x)$ and the sign-only classifier
$\hat{\tilde f}(x) = \arg\max_c \tilde f_c(x)$ use a fixed
deterministic tie-breaking rule (e.g.\ lowest index).  We assume
that ties $f_a(x) = f_c(x)$ and
$\tilde f_a(x) = \tilde f_c(x)$ occur with probability zero under
the data distribution~$\mathcal{D}$.  This is a generic
nondegeneracy condition; in particular, the sign-only logits
$\tilde f_c$ depend only on the discrete sign code
$\mathbf{S} \in \{-1,+1\}^{d}$ and therefore take finitely many
values, so ties in the sign-only classifier can occur with positive
probability unless the weights and biases place distinct logit
values on every occupied sign code.  In practice this nondegeneracy
holds for all trained models we tested (see
\cref{app:condition-ties}); when it fails, the bound remains valid
up to an additive correction equal to the tie probability.

\subsection{The general decomposition for an arbitrary gauge}
\label{app:gauge-general}

Fix any reference-magnitude function
$\nu_j\colon \{-1,+1\}\to\mathbb{R}_{\ge 0}$ for each neuron~$j$.
Substituting
$M_j = \nu_j(S_j) + \bigl[M_j - \nu_j(S_j)\bigr]$ into
\eqref{eq:logit-raw}:
\begin{align}
  f_c(x)
  &= \sum_j w_{cj}\,S_j\,\nu_j(S_j) \;+\; b_c
     \;+\;
     \sum_j w_{cj}\,S_j\,\bigl[M_j - \nu_j(S_j)\bigr]
  \nonumber\\[4pt]
  &\equiv\;
     \underbrace{
       \tilde f_c^{(\nu)}(x)
     }_{\text{sign-only logit}}
     \;+\;
     \underbrace{
       R_c^{(\nu)}(x)
     }_{\text{residual}}.
  \label{eq:general-decomp}
\end{align}
The sign-only logit $\tilde f_c^{(\nu)}$ depends on $x$ only through
$\mathbf{S}(x)$; the residual $R_c^{(\nu)}$ absorbs everything
else.
Equation~\eqref{eq:general-decomp} is an \emph{exact identity}---it
holds for every $x$ and every choice of $\nu$, with no
approximation.

\begin{remark}[Codomain of $\nu_j$]
\label{rmk:codomain}
We define
$\nu_j\colon\{-1,+1\}\to\mathbb{R}_{\ge 0}$ (not
$\mathbb{R}_{>0}$) because for ReLU,
$\mu_j(-1) = \mathbb{E}[|h_j|\mid S_j{=}{-1}] = 0$, and the gauge
must be able to take this value.
\end{remark}

\subsection{Why the conditional mean is the canonical gauge}
\label{app:gauge-opt}

For each neuron $j$ and each sign value $s \in \{-1,+1\}$, define
the conditional mean magnitude
\begin{equation}\label{eq:cond-mean}
  \mu_j(s) \;=\; \mathbb{E}\bigl[\,M_j \;\big|\; S_j = s\,\bigr],
\end{equation}
which is well defined whenever $\Pr(S_j = s) > 0$.

\begin{proposition}[MMSE optimality of the conditional mean]
\label{prop:gauge-opt}
For each neuron~$j$, assuming $\mathbb{E}[M_j^2] < \infty$, the
conditional mean $\mu_j(s) = \mathbb{E}[M_j \mid S_j = s]$
minimises the mean squared prediction error
\begin{equation}\label{eq:mmse-obj}
  \mathbb{E}\bigl[\,(M_j - \nu_j(S_j))^2\,\bigr]
\end{equation}
over all functions
$\nu_j\colon\{-1,+1\}\to\mathbb{R}_{\ge 0}$.  The minimiser is
unique for every $s$ with $\Pr(S_j = s) > 0$.
\end{proposition}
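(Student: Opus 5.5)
The plan is to reduce the problem to a pointwise minimisation on each of the two atoms $\{S_j = +1\}$ and $\{S_j = -1\}$. Since $S_j$ takes only the two values $\pm 1$, a function $\nu_j\colon\{-1,+1\}\to\mathbb{R}_{\ge 0}$ is simply a pair of numbers $(\nu_j(+1),\nu_j(-1)) = (a_+,a_-) \in \mathbb{R}_{\ge 0}^2$. By the law of total expectation, the objective \eqref{eq:mmse-obj} splits as
\[
  \mathbb{E}\bigl[(M_j-\nu_j(S_j))^2\bigr]
  = \sum_{s\in\{\pm1\}} \Pr(S_j=s)\,\mathbb{E}\bigl[(M_j-a_s)^2 \,\big|\, S_j=s\bigr].
\]
Each summand involves only its own variable $a_s$. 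The joint minimisation therefore decouples into two independent one-dimensional problems, one per sign value with positive probability. If $\Pr(S_j=s)=0$, that summand vanishes and $a_s$ is arbitrary, which is why uniqueness is claimed only for $s$ with $\Pr(S_j=s)>0$.

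For a fixed $s$ with $\Pr(S_j=s)>0$, write $\mu=\mu_j(s)$ as in \eqref{eq:cond-mean}. The assumption $\mathbb{E}[M_j^2]<\infty$ gives $\mathbb{E}[M_j^2\mid S_j=s]\le \mathbb{E}[M_j^2]/\Pr(S_j=s)<\infty$, so $\mu$ is finite and all the conditional moments below exist. Expanding $(M_j-a)^2=(M_j-\mu)^2+2(M_j-\mu)(\mu-a)+(\mu-a)^2$ and taking conditional expectations, the cross term vanishes because $\mathbb{E}[M_j-\mu\mid S_j=s]=0$. This leaves the bias--variance identity
\[
  \mathbb{E}\bigl[(M_j-a)^2\mid S_j=s\bigr]
  = \operatorname{Var}(M_j\mid S_j=s) + (\mu-a)^2 .
\]
The right-hand side is a strictly convex quadratic in $a$, minimised exactly at $a=\mu$.

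It then remains to check that the constrained minimisation over $\mathbb{R}_{\ge 0}$ has the same answer as the unconstrained one. Since $M_j=|h_j|\ge 0$ almost surely, $\mu=\mathbb{E}[M_j\mid S_j=s]\ge 0$, so the unconstrained minimiser lies in the feasible set (this covers the ReLU case $\mu_j(-1)=0$ noted in Remark~\ref{rmk:codomain}). Any other $a\ge 0$ adds the strictly positive amount $(\mu-a)^2$ to the $s$-summand, and this is multiplied by $\Pr(S_j=s)>0$. Hence $\nu_j=\mu_j$ is the minimiser, unique on every sign value of positive probability.

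There is no real obstacle here. The only points needing care are confirming that the conditional moments are finite, so that the expansion is legitimate and the cross term is well defined, and stating precisely that uniqueness holds only on the positive-probability atoms.
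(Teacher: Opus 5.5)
Your proposal is correct and follows essentially the same route as the paper: condition on each sign value $s$ with $\Pr(S_j=s)>0$, apply the bias--variance identity $\mathbb{E}[(M_j-a)^2\mid S_j=s]=\operatorname{Var}(M_j\mid S_j=s)+(\mu_j(s)-a)^2$, and average over $S_j$. You also make explicit two points the paper leaves implicit: the conditional second moments are finite, and $\mu_j(s)\ge 0$ lies in the feasible set $\mathbb{R}_{\ge 0}$, so the unconstrained minimiser is admissible.
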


\begin{proof}
For each fixed $s \in \{-1,+1\}$ with $\Pr(S_j = s) > 0$, the
bias--variance decomposition gives
\[
  \mathbb{E}\bigl[(M_j - \nu_j(s))^2 \;\big|\; S_j = s\bigr]
  \;=\;
  \operatorname{Var}(M_j \mid S_j{=}s)
  \;+\;
  \bigl(\mu_j(s) - \nu_j(s)\bigr)^2.
\]
The first term is independent of $\nu_j(s)$.  The second is
non-negative and vanishes if and only if $\nu_j(s) = \mu_j(s)$.
Averaging over $S_j$ completes the proof.
\end{proof}

The conditional mean $\mu_j(s)$ is the unique
\emph{minimum mean-squared-error (MMSE) predictor} of $M_j$ from
the single sign bit $S_j = s$.  It extracts exactly the
information about $M_j$ that is available in $S_j$, no more and
no less.

\subsection{Effect of gauge choice on the residual}
\label{app:gauge-residual}

\begin{lemma}[Gauge change is a sign code-dependent shift]
\label{lem:gauge-shift}
For any two gauges $\nu$ and $\mu$,
\begin{equation}\label{eq:gauge-shift}
  R_c^{(\nu)}(x)
  \;=\;
  R_c^{(\mu)}(x) \;+\; \Delta_c\!\bigl(\mathbf{S}(x)\bigr),
  \qquad
  \Delta_c(\mathbf{S})
  \;:=\;
  \sum_j w_{cj}\,S_j\,
    \bigl[\mu_j(S_j) - \nu_j(S_j)\bigr].
\end{equation}
\end{lemma}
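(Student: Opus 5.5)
The plan is a direct algebraic comparison of the two exact decompositions in \eqref{eq:general-decomp}, one for the gauge $\nu$ and one for the gauge $\mu$. Both hold pointwise for every $x$, and both equal the same logit $f_c(x)$:
\[
  \tilde f_c^{(\nu)}(x) + R_c^{(\nu)}(x)
  \;=\;
  \tilde f_c^{(\mu)}(x) + R_c^{(\mu)}(x).
\]
Rearranging gives
\[
  R_c^{(\nu)}(x) - R_c^{(\mu)}(x) \;=\; \tilde f_c^{(\mu)}(x) - \tilde f_c^{(\nu)}(x).
\]
So it suffices to compute the difference of the two sign-only logits.

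In that difference the bias $b_c$ cancels. What remains is
\[
  \sum_j w_{cj}\, S_j(x)\,\bigl[\mu_j(S_j(x)) - \nu_j(S_j(x))\bigr],
\]
which is exactly $\Delta_c(\mathbf{S}(x))$ as defined in \eqref{eq:gauge-shift}. As a cross-check, the same identity follows from the residual side alone. Write
\[
  M_j - \nu_j(S_j) \;=\; \bigl[M_j - \mu_j(S_j)\bigr] + \bigl[\mu_j(S_j) - \nu_j(S_j)\bigr]
\]
inside $R_c^{(\nu)}$, and split the sum by linearity. The first piece is $R_c^{(\mu)}$ and the second is $\Delta_c$.

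It remains to note that $\Delta_c$ really is a function of $\mathbf{S}(x)$ alone. Each summand involves $x$ only through $S_j(x)$, because $\mu_j$ and $\nu_j$ are fixed functions on $\{-1,+1\}$. The weights $w_{cj}$ do not depend on $x$.

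There is no real obstacle; the identity is purely algebraic. The one point needing care is well-definedness. $\mu_j(s)$ is defined only when $\Pr(S_j=s)>0$. On null sign values one can set $\mu_j(s)$ arbitrarily, for instance to $0$, and the identity still holds for every $x$, since both sides use the same convention. Otherwise it holds almost surely, which is all the later argument needs.
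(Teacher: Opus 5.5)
Your proposal is correct, and your cross-check (adding and subtracting $\mu_j(S_j)$ inside $R_c^{(\nu)}$ and splitting the sum by linearity) is exactly the paper's proof; your primary route of equating the two exact decompositions of $f_c$ is an equivalent rearrangement of the same algebra. The well-definedness caveat is unnecessary, since the identity holds for every $x$ for any two fixed functions $\nu_j,\mu_j$ on $\{-1,+1\}$, so no almost-sure qualification is needed.
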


\begin{proof}
From $R_c^{(\nu)} = \sum_j w_{cj} S_j (M_j - \nu_j(S_j))$:
\begin{align*}
  R_c^{(\nu)}
  &= \sum_j w_{cj} S_j
     \bigl[(M_j - \mu_j(S_j)) + (\mu_j(S_j) - \nu_j(S_j))\bigr]
  = R_c^{(\mu)} + \Delta_c(\mathbf{S}).
  \qedhere
\end{align*}
\end{proof}

\begin{corollary}[Conditional variance is gauge-invariant]
\label{cor:var-invariant}
Since $\Delta_c(\mathbf{S})$ is a constant conditional on
$\mathbf{S}$:
$\operatorname{Var}[R_c^{(\nu)} \mid \mathbf{S}]
= \operatorname{Var}[R_c^{(\mu)} \mid \mathbf{S}]$
for every gauge $\nu$.
\end{corollary}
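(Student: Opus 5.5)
The plan is to derive the corollary directly from Lemma~\ref{lem:gauge-shift}, using only the elementary fact that conditional variance is unchanged by adding a quantity that is measurable with respect to the conditioning variable. First, I will fix a class $c$ and two gauges $\nu$ and $\mu$. By Lemma~\ref{lem:gauge-shift}, for every input $x$ we have the exact pointwise identity $R_c^{(\nu)}(x) = R_c^{(\mu)}(x) + \Delta_c(\mathbf{S}(x))$. Here $\Delta_c(\mathbf{S}) = \sum_j w_{cj} S_j [\mu_j(S_j) - \nu_j(S_j)]$ is a deterministic function of the sign code alone. The weights $w_{cj}$ are fixed, and $\mu_j,\nu_j$ are fixed functions on $\{-1,+1\}$. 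Hence $\Delta_c(\mathbf{S}(X))$ is $\sigma(\mathbf{S}(X))$-measurable.

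Second, I will condition on the event $\mathbf{S}(X) = \mathbf{s}$ for a fixed code $\mathbf{s}$ with positive probability. Because $\mathbf{S}$ takes finitely many values, this is an elementary conditioning and needs no regular conditional distributions. On this event $\Delta_c$ equals the constant $\Delta_c(\mathbf{s})$. Writing $\operatorname{Var}[Y\mid\mathbf{S}] = \mathbb{E}[(Y - \mathbb{E}[Y\mid \mathbf{S}])^2 \mid \mathbf{S}]$, linearity of conditional expectation gives $\mathbb{E}[R_c^{(\nu)}\mid\mathbf{S}] = \mathbb{E}[R_c^{(\mu)}\mid\mathbf{S}] + \Delta_c(\mathbf{S})$. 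The centred quantities $R_c^{(\nu)} - \mathbb{E}[R_c^{(\nu)}\mid \mathbf{S}]$ and $R_c^{(\mu)} - \mathbb{E}[R_c^{(\mu)}\mid\mathbf{S}]$ are therefore identical, so their conditional second moments coincide.

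The only point needing care is integrability, which I expect to be the main (mild) obstacle. The conditional variances must be finite, or both equal $+\infty$, for the identity to be meaningful. I will assume $\mathbb{E}[M_j^2]<\infty$, as in Proposition~\ref{prop:gauge-opt}, and that $\nu_j$ is finite-valued. Then each $R_c^{(\nu)}$ is a finite linear combination of square-integrable variables plus a bounded shift, and so is square-integrable. In the cases the paper uses, the conditional variances are thus finite and the computation above is rigorous. The same argument applies verbatim to the pairwise residuals $R_{a,c}$, with $\Delta_{a,c} = \Delta_a - \Delta_c$. It also applies to the additional conditioning on $\hat f(X) = \hat y$ used in $r_c^2$. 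In that case the conditional second moment, not just the variance, changes by the shift, which is why the paper works with $\mu$ specifically.
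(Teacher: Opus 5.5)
Your proof is correct and is essentially the paper's argument: the corollary follows from Lemma~\ref{lem:gauge-shift} because the shift $\Delta_c(\mathbf{S})$ is $\sigma(\mathbf{S})$-measurable, so it cancels once each residual is centred at its conditional mean, which is exactly the one-line justification the paper gives. Your remarks on square-integrability of the $M_j$, and on $\mathbf{S}$ taking finitely many values so that the conditioning is elementary, make explicit what the paper leaves implicit.
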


\subsection{Activation-specific forms of the canonical gauge}
\label{app:gauge-activations}

\paragraph{ReLU.}
$\mu_j(-1) = 0$,
$\mu_j(+1) = \mathbb{E}[z_j \mid z_j > 0]$.
The sign code recovers the classical gate vector
$g_j = \mathbf{1}[z_j > 0]$ under $S_j = 2g_j - 1$.

\paragraph{SiLU and GELU.}
These attenuate but do not zero out the negative side:
$0 < \mu_j(-1) \ll \mu_j(+1)$, with magnitude asymmetry
$\delta_j \in [0.4, 0.95]$ across tested configurations
(Appendix~\ref{app:asymmetry}).

\paragraph{Tanh.}
Because $\tanh$ is odd and trained networks produce approximately
symmetric pre-activations,
$\mu_j(+1) \approx \mu_j(-1)$ (i.e.\ $\delta_j \approx 0$).
The sign-only logit simplifies to
$\tilde f_c(x) = \sum_j w_{cj}\,\bar\mu_j\,S_j(x) + b_c$
where $\bar\mu_j = \tfrac{1}{2}(\mu_j(+1) + \mu_j(-1))$.

\subsection{The pairwise margin decomposition}
\label{app:gauge-margin}

From \eqref{eq:general-decomp} with the canonical gauge
$\nu = \mu$, the pairwise margin between classes~$a$
and~$c$ decomposes as:
\begin{equation}\label{eq:margin-decomp-app}
  M_{a,c}(x)
  \;=\;
  f_a(x) - f_c(x)
  \;=\;
  \underbrace{
    \tilde f_a(x) - \tilde f_c(x)
  }_{\widetilde{M}_{a,c}(x)}
  \;+\;
  \underbrace{
    R_a(x) - R_c(x)
  }_{R_{a,c}(x)},
\end{equation}
where the sign-only margin
$\widetilde{M}_{a,c}(x)
= \sum_j (w_{a,j} - w_{c,j})\,S_j(x)\,\mu_j(S_j(x))
  + (b_a - b_c)$
depends on $x$ only through the sign code $\mathbf{S}(x)$, and
the pairwise residual
$R_{a,c}(x)
= \sum_j (w_{a,j} - w_{c,j})\,S_j(x)\,
  [M_j(x) - \mu_j(S_j(x))]$.
The full network agrees with the sign-only classifier whenever
$|R_{a,c}(x)| < |\widetilde{M}_{a,c}(x)|$ for every competitor
class~$c$.

\subsection{Full proof of the gate domination bound}
\label{app:gate-dom-proof}

We restate the theorem for convenience and give the complete proof.

\subsubsection*{Notation and definitions}

For a test input $x$ with full-model predicted class
$\hat y = \hat f(x)$ and each competitor $c \neq \hat y$, recall
from the main text:
\begin{itemize}
\item The \textbf{per-neuron vote bias}
  $m_c(x) = \frac{1}{d}[\sum_j (w_{\hat y,j} - w_{c,j})\,
  S_j(x)\,\mu_j(S_j(x)) + (b_{\hat y} - b_c)]$,
  so that $\widetilde{M}_{\hat y,c}(x) = d \cdot m_c(x)$.

\item The \textbf{conditional residual moment}
  $r_c^2(x) = \mathbb{E}[R_{\hat y(x),c}^{\,2}(X) \mid
  \mathbf{S}(X) = \mathbf{S}(x),\, \hat f(X) = \hat y(x)]$.

\item The \textbf{pair-specific} and \textbf{overall
  gate-domination scores}:
  \begin{equation}\label{eq:Z1-def}
    Z_{1,c}(x)
    \;=\;
    \frac{|\widetilde{M}_{\hat y(x),c}(x)|}{r_c(x)}\,,
    \qquad
    Z_1(x)
    \;=\;
    \min_{c \neq \hat y(x)}\; Z_{1,c}(x).
  \end{equation}
\end{itemize}

\subsubsection*{Why we condition on both $\mathbf{S}$ and $\hat f$}

The sign-only margin $\widetilde{M}_{\hat y,c}$ is a deterministic
function of $\mathbf{S}$, but the class pair $(\hat y, c)$ is not:
two inputs sharing the same sign code can have different full-model
predictions (since the full model also depends on magnitudes).
Conditioning on $(\mathbf{S}, \hat f)$ jointly fixes both the
threshold $|\widetilde{M}_{\hat y,c}|$ and the class pair,
making the Markov step valid.

\subsubsection*{The bound}

\begin{theorem}[Gate domination---restated]
\label{thm:gate-dom-full}
Let $\varepsilon_1 = \Pr_x(\hat f(x) \neq \hat{\tilde f}(x))$.
Assume that ties in both classifiers occur with probability zero
under $\mathcal{D}$, and that all conditional residual second
moments $r_c^2(x)$ are finite (guaranteed on bounded-input
datasets with finite network parameters).

\smallskip
\noindent\textbf{(i)~Disagreement bound.}
\begin{equation}\label{eq:eps1-full}
  \varepsilon_1
  \;\le\;
  (C{-}1)\;\mathbb{E}_x\!\bigl[Z_1^{-2}(x)\bigr].
\end{equation}

\smallskip
\noindent\textbf{(ii)~Width scaling.}
If\/ $r_c^2(x) \le d\,\bar\sigma_c^2(x)$ for all
competitors~$c \neq \hat y$, then
\begin{equation}\label{eq:z1-scaling-full}
  Z_1(x) \;\ge\;
  \Bigl(\min_{c \neq \hat y}
  \frac{|m_c(x)|}{\bar\sigma_c(x)}\Bigr)\sqrt{d}\,.
\end{equation}
\end{theorem}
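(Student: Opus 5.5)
The proof of (i) will be a union bound over competitors followed by a conditional Chebyshev/Markov step, and (ii) will be pure algebra. Fix $x$ with $\hat y=\hat f(x)$.

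\textbf{Step 1: reduce disagreement to residuals.} If the sign-only classifier disagrees with the full one, there is a class $c\neq\hat y$ with $\tilde f_c(x)\ge \tilde f_{\hat y}(x)$, while $f_{\hat y}(x)>f_c(x)$ (ties have probability zero). Using the margin decomposition \eqref{eq:margin-decomp-app}, $M_{\hat y,c}=\widetilde M_{\hat y,c}+R_{\hat y,c}>0$ and $\widetilde M_{\hat y,c}\le 0$. Hence $|R_{\hat y,c}(x)|\ge|\widetilde M_{\hat y,c}(x)|$, and
\[
\{\hat f\neq\hat{\tilde f}\}\subseteq\bigcup_{c\neq\hat y(x)}\bigl\{|R_{\hat y(x),c}(x)|\ge|\widetilde M_{\hat y(x),c}(x)|\bigr\}.
\]

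\textbf{Step 2: conditional Markov.} Condition on the event $\{\mathbf S(X)=\mathbf s,\ \hat f(X)=a\}$. Under this conditioning the threshold $t_c=|\widetilde M_{a,c}(\mathbf s)|$ and the class pair are constant, which is exactly the reason given for conditioning on both. Markov's inequality applied to $R_{a,c}^2$ then gives
\[
\Pr\bigl(|R_{a,c}|\ge t_c\mid\cdot\bigr)\le \frac{r_c^2}{t_c^2}=Z_{1,c}^{-2}\le Z_1^{-2},
\]
where $Z_1$ is constant on the cell. Summing over the $C-1$ competitors gives a conditional disagreement probability of at most $(C-1)Z_1^{-2}$. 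Taking the expectation over the cells by the tower property, and using that $Z_1(X)$ is measurable with respect to $(\mathbf S,\hat f)$, yields \eqref{eq:eps1-full}.

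\textbf{Main obstacle.} The delicate point is measure-theoretic bookkeeping rather than any real estimate. Cells on which $t_c=0$ give $Z_1^{-2}=\infty$, so the bound is trivially true there; I will adopt the convention $1/0=\infty$. Cells of probability zero, or uncountably many conditioning values of $\hat f$, are not an issue because $\mathbf S$ is discrete and $\hat f$ is finite-valued, so the conditioning is elementary. I also need to check that $\min_c$ commutes correctly: each $Z_{1,c}^{-2}\le Z_1^{-2}$ because $Z_1$ is the minimum, which is the direction we need.

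\textbf{Step 3: width scaling (ii).} By definition $|\widetilde M_{\hat y,c}|=d\,|m_c|$, and by hypothesis $r_c\le\sqrt d\,\bar\sigma_c$. Therefore $Z_{1,c}\ge d|m_c|/(\sqrt d\,\bar\sigma_c)=\sqrt d\,|m_c|/\bar\sigma_c$. Taking the minimum over $c$ gives \eqref{eq:z1-scaling-full}.
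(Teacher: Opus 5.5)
Your proposal is correct and takes essentially the same route as the paper. It uses the same reduction of disagreement to a margin violation $|R_{\hat y,c^*}|\ge|\widetilde M_{\hat y,c^*}|$, the same conditional Markov step on $R^2$ given $\sigma(\mathbf S(X),\hat f(X))$ followed by the tower property, and the same algebra for the width scaling; the paper handles the random competitor set by summing indicators $\mathbf 1_{B_c}$ over all fixed classes $c$ (with $c\neq\hat f(X)$ built into $B_c$), whereas you fix $\hat f=a$ on each cell of the finite partition first, which is equivalent.
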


\begin{proof}
The proof has two parts.

\medskip
\noindent\textbf{Part (i): Disagreement bound.}

We proceed in three steps.

\medskip
\noindent\emph{Step 1: Disagreement requires a margin violation.}
Suppose $\hat f(x) \neq \hat{\tilde f}(x)$.  Let
$\hat y = \hat f(x)$ and
$c^* = \hat{\tilde f}(x) \neq \hat y$.  Since
$\hat y = \arg\max_a f_a(x)$, we have
$f_{\hat y}(x) - f_{c^*}(x) > 0$ (strict inequality by the
zero-tie assumption).  Since
$c^* = \arg\max_a \tilde f_a(x)$ and $c^* \neq \hat y$, we have
$\widetilde{M}_{\hat y, c^*}(x) \le 0$.
From
$f_{\hat y} - f_{c^*}
= \widetilde{M}_{\hat y,c^*} + R_{\hat y,c^*}$
with $f_{\hat y} - f_{c^*} > 0$ and
$\widetilde{M}_{\hat y,c^*} \le 0$:
\begin{equation}\label{eq:flip-condition}
  |R_{\hat y,c^*}(x)| \;\ge\; |\widetilde{M}_{\hat y,c^*}(x)|.
\end{equation}
Disagreement therefore implies that at least one class $c^*$
satisfies~\eqref{eq:flip-condition}.

\medskip
\noindent\emph{Step 2: Conditional Markov's inequality.}
Define the $\sigma$-algebra
$\mathcal{G} = \sigma(\mathbf{S}(X),\, \hat f(X))$.
For each fixed class $c \in \{1,\ldots,C\}$, define the event
\[
  B_c
  \;=\;
  \bigl\{\,
    c \neq \hat f(X),\;\;
    |R_{\hat f(X),c}(X)| \ge
    |\widetilde{M}_{\hat f(X),c}(X)|
  \,\bigr\}.
\]
Step~1 shows that
$\{\hat f(X) \neq \hat{\tilde f}(X)\}
\subseteq \bigcup_{c=1}^{C} B_c$,
and hence
\begin{equation}\label{eq:indicator-bound}
  \mathbf{1}\!\bigl[\hat f(X) \neq \hat{\tilde f}(X)\bigr]
  \;\le\;
  \sum_{c=1}^{C} \mathbf{1}_{B_c}.
\end{equation}
We now bound $\mathbb{E}[\mathbf{1}_{B_c} \mid \mathcal{G}]$ for
each~$c$.  Conditional on $\mathcal{G}$, the full-model prediction
$\hat y = \hat f(X)$ is fixed.  If $c = \hat y$, then
$\mathbf{1}_{B_c} = 0$.  If $c \neq \hat y$, the class pair
$(\hat y, c)$ is fixed, the sign-only margin
$|\widetilde{M}_{\hat y,c}|$ is a deterministic constant (being a
function of $\mathbf{S}$ and the fixed $\hat y$), and Markov's
inequality on $R_{\hat y,c}^{\,2}$ gives:
\begin{equation}\label{eq:markov-step}
  \mathbb{E}[\mathbf{1}_{B_c} \mid \mathcal{G}]
  \;=\;
  \mathbf{1}_{\{c \neq \hat y\}}\;
  \Pr\!\bigl(R_{\hat y,c}^{\,2} \ge
  \widetilde{M}_{\hat y,c}^{\,2}
  \;\big|\; \mathcal{G}\bigr)
  \;\le\;
  \mathbf{1}_{\{c \neq \hat y\}}\;
  \frac{\mathbb{E}[R_{\hat y,c}^{\,2} \mid \mathcal{G}]}
       {\widetilde{M}_{\hat y,c}^{\,2}}
  \;=\;
  \mathbf{1}_{\{c \neq \hat y\}}\;
  Z_{1,c}^{-2}(X).
\end{equation}
This step requires only finite conditional second moments.

\medskip
\noindent\emph{Step 3: Integrate and combine.}
Taking expectations of~\eqref{eq:indicator-bound} and applying
tower property with~\eqref{eq:markov-step}:
\begin{align}
  \varepsilon_1
  &\;=\;
  \mathbb{E}\!\bigl[
    \mathbf{1}[\hat f(X) \neq \hat{\tilde f}(X)]
  \bigr]
  \;\le\;
  \sum_{c=1}^{C}
  \mathbb{E}\!\bigl[\mathbf{1}_{B_c}\bigr]
  \;=\;
  \sum_{c=1}^{C}
  \mathbb{E}\!\bigl[
    \mathbb{E}[\mathbf{1}_{B_c} \mid \mathcal{G}]
  \bigr]
  \nonumber\\
  &\;\le\;
  \mathbb{E}\!\Biggl[
    \sum_{c \neq \hat f(X)} Z_{1,c}^{-2}(X)
  \Biggr]
  \;\le\;
  (C{-}1)\;\mathbb{E}\!\bigl[Z_1^{-2}(X)\bigr],
  \label{eq:final-bound}
\end{align}
where the last inequality uses
$Z_1^{-2} = \max_{c \neq \hat y} Z_{1,c}^{-2} \ge Z_{1,c}^{-2}$
for each~$c$, and the sum over $c \neq \hat f(X)$ has exactly
$C - 1$ terms.

\medskip
\noindent\textbf{Part (ii): Width scaling.}

By definition,
$|\widetilde{M}_{\hat y,c}(x)| = d\,|m_c(x)|$.
The assumption $r_c^2(x) \le d\,\bar\sigma_c^2(x)$ gives
$r_c(x) \le \bar\sigma_c(x)\sqrt{d}$.
Therefore:
\[
  Z_{1,c}(x)
  \;=\;
  \frac{|\widetilde{M}_{\hat y,c}(x)|}{r_c(x)}
  \;\ge\;
  \frac{d\,|m_c(x)|}{\bar\sigma_c(x)\sqrt{d}}
  \;=\;
  \frac{|m_c(x)|}{\bar\sigma_c(x)}\,\sqrt{d}\,.
\]
Minimising over $c \neq \hat y$ gives
$Z_1(x) \ge (\min_{c} |m_c|/\bar\sigma_c)\sqrt{d}$.
\end{proof}

\subsection{Exponential rate under sub-Gaussian tails}
\label{app:subgauss-bound}

The main theorem uses only Markov's inequality, yielding a
polynomial bound in $Z_1$.  Under stronger tail assumptions, the
rate improves to exponential.  This is not required for the main
theorem; we include it to contextualize the empirical contour in
Figure~\ref{fig:gate-dom} uses the exponential form.

\begin{assumption}[Conditional sub-Gaussianity]
\label{ass:subg}
Fix $\mathcal{G} = \sigma(\mathbf{S}(X), \hat f(X))$ and a
competitor $c \neq \hat f(X)$.  Conditional on $\mathcal{G}$, the
pairwise residual $R_{\hat f(X),c}$ is sub-Gaussian with variance
proxy $\sigma_c^2$:
\[
  \Pr\!\bigl(|R_{\hat f(X),c}| \ge t \;\big|\;
  \mathcal{G}\bigr)
  \;\le\;
  2\exp\!\Bigl(-\frac{t^2}
  {2\,\sigma_c^2}\Bigr),
  \qquad \forall\; t \ge 0.
\]
The proxy $\sigma_c^2$ may differ from the conditional second
moment $r_c^2$; in general $\sigma_c^2 \ge
\operatorname{Var}(R_{\hat y,c} \mid \mathcal{G})$.
\end{assumption}

Under Assumption~\ref{ass:subg}, the proof of Part~(i)
carries through with the sub-Gaussian tail replacing Markov at
Step~2, yielding:
\begin{equation}\label{eq:eps1-subgauss}
  \varepsilon_1
  \;\le\;
  2(C{-}1)\;\mathbb{E}_x\!\Bigl[
    \exp\!\bigl(-\tfrac{1}{2}\,(Z_1^{(\exp)})^2(x)\bigr)
  \Bigr],
\end{equation}
where $Z_{1,c}^{(\exp)}(x) =
|\widetilde{M}_{\hat y,c}(x)|/\sigma_c(x)$ and
$Z_1^{(\exp)}(x) = \min_{c \neq \hat y} Z_{1,c}^{(\exp)}(x)$.
This becomes non-vacuous at
$Z_1^{(\exp)} > \sqrt{2\log(2(C{-}1))} \approx 2.4$ for $C = 10$.
The diagnostics in \cref{app:conditions-summary} indicate that sub-Gaussian
concentration holds approximately at moderate widths.

\subsection{The expanded form of $Z_1$ for ReLU networks}
\label{app:z1-expanded}

For ReLU networks, inactive neurons contribute zero magnitude
($\mu_j(-1) = 0$), so only the active set
$J = \{j : S_j(x) = +1\}$ with $d_{\mathrm{act}} = |J|$
contributes.  The pair-specific score admits the approximation
\begin{equation}\label{eq:Z1-expanded}
  Z_{1,c}(x)
  \;\approx\;
  \frac{\sqrt{d_{\mathrm{act}}(x)}\;\rho_{\hat y,c}(x)}
       {\hat\sigma_r(x)\,\sqrt{1 + \hat\rho_m}}\,,
\end{equation}
where $\rho_{\hat y,c}$ is the normalised gate-only margin and
$\hat\sigma_r$ the empirical per-neuron relative-magnitude standard
deviation.  Width provides $\sqrt{d_{\mathrm{act}}}$ averaging;
depth provides larger $\rho_{\min}$.
For general sign-preserving activations, the sum runs over all $d$
neurons and $\mu_j(S_j)$ replaces $\mu_j(+1)$.

\section{Empirical Diagnostics Supporting the Theorem Conditions}
\label{app:theorem-conditions}

The disagreement bound in Theorem~\ref{thm:gate-dom} requires
only finite conditional second moments and nondegeneracy (no ties).
The width-scaling statement (Part~ii) additionally requires the
residual-scaling condition $r_c^2 \le d\,\bar\sigma_c^2$.
The interpretation that $Z_1$ grows with width further relies on
positive vote bias ($m_c > 0$).
We provide empirical diagnostics supporting each of these conditions
across the full SVHN width--depth grid (FFNN, SiLU,
12~depths $\times$ 10~widths, 120~models).
All diagnostics are computed from two passes: canonical-gauge
statistics $\mu_j(s)$, $\sigma_j^2(s)$ from the training split,
and residual diagnostics from the test split.
Figure~\ref{fig:theorem-conditions} summarises the results.

\begin{figure}[htb!]
\centering
\includegraphics[width=\textwidth]{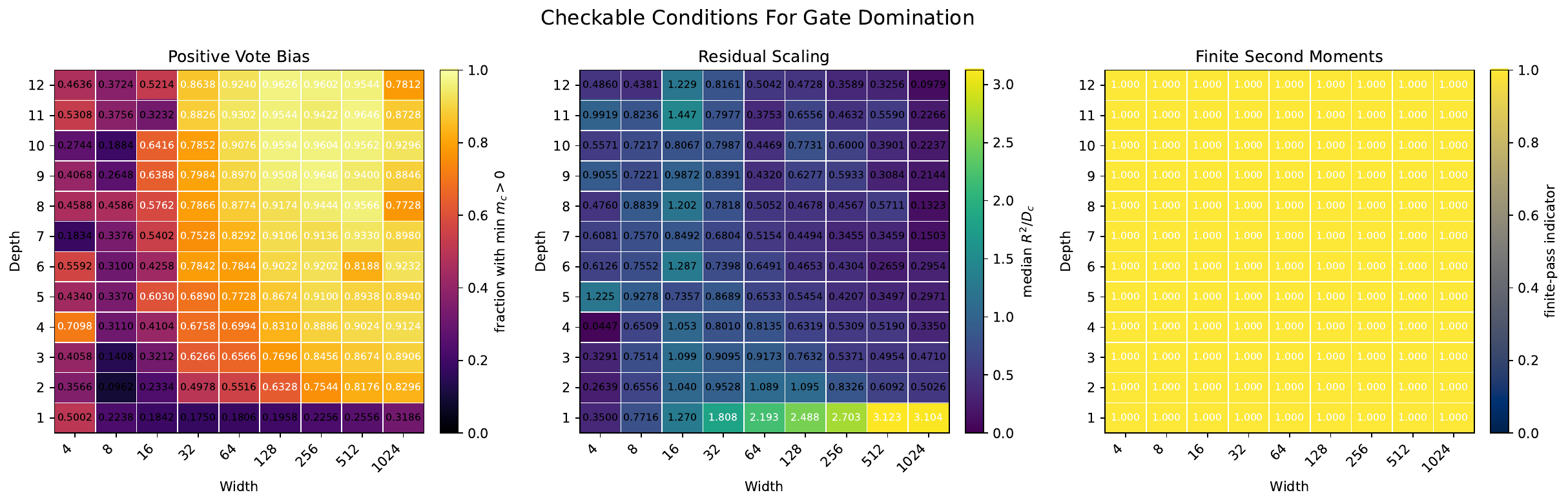}
\caption{%
\textbf{Empirical diagnostics for the gate-domination theorem
conditions}
(FFNN, SiLU, SVHN; widths 4--1024, depths 1--12).
\textbf{Left:}~Positive vote bias.  Each cell shows the fraction
of test inputs for which $m_c(x) > 0$ for all competitor classes.
In the overparameterized regime (depth $\ge 4$, width $\ge 64$),
this fraction exceeds $0.80$; the sign-only and full classifiers
agree on exactly these inputs.
\textbf{Centre:}~Residual scaling diagnostic.  Each cell shows the
median of the per-input ratio $R_{\hat y,c}^2(x) / D_c(x)$, where
$D_c = \sum_j (w_{\hat y,j} - w_{c,j})^2 \sigma_j^2(S_j)$ is
the diagonal (uncorrelated) variance proxy.
Under approximate conditional uncorrelatedness and a CLT
approximation over $d$ neurons, this ratio is approximately
$\chi^2(1)$ with median~${\approx}0.45$.  The observed values are
consistent with this prediction in the transitional regime
(depth~${\ge}3$, width~${\ge}32$), supporting the residual-scaling
condition.
\textbf{Right:}~Finite second moments.  All cells show $1.0$:
every gauge value, residual, and logit is finite across the entire
grid, as guaranteed by bounded SVHN inputs and finite network
parameters.}
\label{fig:theorem-conditions}
\end{figure}

\subsection{Positive vote bias ($m_c > 0$)}
\label{app:condition-vote-bias}

For each test input $x$, we compute the sign-only margin
$\widetilde{M}_{\hat y,c}(x)$ for every competitor $c \neq \hat y$
and check whether all margins are positive, i.e.\ whether
$m_c(x) > 0$ for every competitor.
Figure~\ref{fig:theorem-conditions}~(left) displays the fraction of
test inputs satisfying this condition across the width--depth grid.

\paragraph{Depth is the bottleneck.}
At depth~1, the all-positive fraction is below $0.35$ across all
widths: the sign-only classifier disagrees with the full model on
most inputs.  This is consistent with the theorem's prediction:
at depth~1, no compositional feature extraction has occurred, so
per-neuron vote contributions are approximately unbiased
($m_c \approx 0$) and no amount of width can compensate.

\paragraph{The transition is sharp.}
Between depth~2 and depth~4, the fraction jumps from ${\sim}0.50$
to ${\sim}0.85$ at width~$128$.  By depth~$6$ and width~$\ge 64$,
the fraction exceeds $0.90$ across all tested configurations.

\paragraph{Vote bias tracks agreement exactly.}
In our diagnostic data, the fraction of inputs where the sign-only
classifier agrees with the full model equals the fraction where
all vote biases are positive, in every row of the grid.
Under the no-ties assumption, this is expected:
$m_c(x) > 0$ for all $c \neq \hat y$ if and only if
$\hat{\tilde f}(x) = \hat y$.
This confirms that the mechanism identified by the
theorem---disagreement occurs when the residual overwhelms a
negative sign-only margin---is the actual mechanism at work.

\subsection{Residual scaling ($r_c^2 \le d\,\bar\sigma_c^2$)}
\label{app:condition-residual}

The theorem's condition requires that the conditional residual
moment $r_c^2(x) = \mathbb{E}[R_{\hat y,c}^2 \mid \mathbf{S},
\hat f]$ grows at most linearly in width.  We do not estimate
$r_c^2(x)$ exactly (this would require grouping test inputs by
their full sign code, which is infeasible in high dimensions).
Instead, we use the diagonal proxy
\begin{equation}\label{eq:diag-proxy}
  D_c(x)
  \;=\;
  \sum_{j=1}^d (w_{\hat y,j} - w_{c,j})^2\,
  \sigma_j^2\!\bigl(S_j(x)\bigr),
\end{equation}
which equals $r_c^2(x)$ when the per-neuron residual contributions
are exactly uncorrelated conditional on $(\mathbf{S}, \hat f)$.
We examine the per-input ratio
$\rho_c(x) = R_{\hat y,c}^2(x) \,/\, D_c(x)$.

\paragraph{Calibration benchmark.}
If the per-neuron residual contributions were exactly uncorrelated
conditional on $(\mathbf{S}, \hat f)$, then by the central limit
theorem (over $d$ neurons) the standardised residual
$R_{\hat y,c}/\sqrt{D_c}$ would be approximately standard normal,
so $\rho_c \sim \chi^2(1)$ with median $\approx 0.455$ and
mean $= 1$.  This benchmark requires both approximate
uncorrelatedness \emph{and} the CLT approximation.

\paragraph{Empirical results.}
Figure~\ref{fig:theorem-conditions}~(centre) shows the median of
$\rho_c$ across the grid.  In the transitional regime (depth $\ge 3$,
width $32$--$256$), the median is $0.4$--$0.9$, consistent with
the $\chi^2(1)$ prediction.  This supports the interpretation that
the diagonal proxy $D_c$ correctly captures the typical residual
scale, and is therefore consistent with the theorem's
residual-scaling condition $r_c^2 = O(d)$.
The mean of $\rho_c$ ranges from $1.5$ to $3.0$, reflecting a
heavy right tail: a small fraction of inputs have
$R^2 \gg D_c$.  If the condition holds with an effective constant
$C_0 \approx 2$--$3$ rather than $C_0 = 1$, the width-scaling
bound becomes
$Z_1 \ge (|m_c|/(C_0^{1/2}\bar\sigma_c))\sqrt{d}$: the
pre-factor changes by at most $\sqrt{3} \approx 1.7$, but the
$\sqrt{d}$ rate is unaffected.

\paragraph{Cross-correlation diagnostic.}
We directly measure pairwise correlations between per-neuron
residual contributions by computing the off-diagonal entries of the
sample correlation matrix of the standardised per-unit residuals.
The mean absolute off-diagonal correlation $\hat\rho_m$ satisfies
$\hat\rho_m \le 0.10$ for width $\le 256$ across all depths,
supporting the approximate uncorrelatedness interpretation.
At width $\ge 512$, $\hat\rho_m$ can increase
(up to ${\sim}0.6$) because most neurons become effectively
always-on or always-off, reducing the number of variable units.
This degradation occurs precisely where $Z_1$ is already large and
the bound is conservative.

At depth~1, the median ratio exceeds $2.0$ at width $\ge 128$,
indicating stronger correlations.  However, depth~1 already fails
on vote bias ($m_c$ not positive), so the scaling interpretation
is inapplicable regardless.

\subsection{Finite second moments}
\label{app:condition-finite}

Figure~\ref{fig:theorem-conditions}~(right) confirms that every
cell in the grid has value $1.0$: all gauge values
$\mu_j(s)$, $\sigma_j^2(s)$, all residuals $R_{\hat y,c}(x)$,
all logits, and all magnitudes are finite.  This is guaranteed by
the bounded support of SVHN images propagated through networks with
finite parameters.
No special verification is needed beyond confirming the absence of
numerical overflow, which we do programmatically.

\subsection{Nondegeneracy (ties)}
\label{app:condition-ties}

The theorem assumes that ties in both classifiers occur with
probability zero.  We check this with tolerance $10^{-8}$: the
fraction of test inputs where the top two logits differ by less
than $10^{-8}$ is exactly $0.0$ for both the full model and the
sign-only classifier in every cell of the grid.

\subsection{Summary}
\label{app:conditions-summary}

The main theorem (Theorem~\ref{thm:gate-dom}) requires only finite
conditional second moments and nondegeneracy, both of which hold
trivially across the entire grid.  The diagnostics supporting the
width-scaling interpretation and the positive-vote-bias corollary
are as follows:
\begin{itemize}
\item \textbf{Positive vote bias:} the fraction of inputs with
  $m_c > 0$ for all competitors exceeds $0.85$ for depth $\ge 4$
  and width $\ge 64$, and exceeds $0.95$ for depth $\ge 6$.
  The sign-only and full classifiers agree on precisely these
  inputs.

\item \textbf{Residual scaling:} the per-input diagnostic ratio
  $R^2/D_c$ has median $0.4$--$0.9$ (consistent with the
  $\chi^2(1)$ benchmark under approximate uncorrelatedness and a
  CLT approximation) and mean $1.5$--$3.0$ in the transitional
  regime.  Cross-correlations satisfy $\hat\rho_m \le 0.10$ for
  width $\le 256$.  These diagnostics are consistent with the
  theorem's condition $r_c^2 = O(d)$, though they do not
  constitute a direct estimate of the conditional moment $r_c^2(x)$.

\item \textbf{Finite second moments:} trivially satisfied on
  bounded-input datasets with finite network parameters.
\end{itemize}
The sub-Gaussian condition required for the exponential remark
(\cref{app:subgauss-bound}) holds in the transitional regime
(width $\le 128$, tail ratios ${\le}1.3$ at $t{=}2$) and degrades
at larger widths where the main Markov bound is already sufficient.


\section{Activation Asymmetry and the Limits of Gate Domination}
\label{app:asymmetry}

The gate-domination bound (Theorem~\ref{thm:gate-dom-full}) is
\emph{activation-universal}: it holds for any sign-preserving
activation under the finite-residual-moment and zero-tie conditions
stated in the theorem.  What \emph{does} depend on the activation is
the size of the gate-domination score $Z_1$---and therefore how
useful the bound is in practice.

This appendix makes that dependence precise.  We introduce the
\emph{magnitude asymmetry ratio} $\delta_j$
(\S\ref{app:asymmetry-def}), show how it enters the sign-only margin
(\S\ref{app:asymmetry-margin}), and use it to explain why $Z_1$ is
large for ReLU/SiLU/GELU but remains modest for tanh
(\S\ref{app:asymmetry-z1}), with quantitative validation across 240
trained models (\S\ref{app:tanh-vs-silu}).  We then discuss the
distinct mechanism by which residual architectures produce weak $Z_1$
(\S\ref{app:resnet-z1}).  Finally, we explain why our repulsion
regularizer $\mathcal{L}_{\mathrm{rep}}$ uses a sigmoid gate proxy
for ReLU-family activations but a tanh sign proxy for tanh networks
(\S\S\ref{app:reg-gate}--\ref{app:reg-tanh}).

\subsection{The magnitude asymmetry ratio}
\label{app:asymmetry-def}

Let $\mu_j(s) = \mathbb{E}[M_j \mid S_j = s]$ be the
sign-conditional mean magnitude from the canonical gauge
(Appendix~\ref{app:gauge}, Eq.~\ref{eq:cond-mean}).  For any
neuron~$j$ with $\mu_j(+1) + \mu_j(-1) > 0$, define
\begin{equation}\label{eq:delta-def}
  \bar\mu_j
  \;:=\;
  \tfrac{1}{2}\bigl(\mu_j(+1) + \mu_j(-1)\bigr),
  \qquad
  \delta_j
  \;:=\;
  \frac{\mu_j(+1) - \mu_j(-1)}
       {\mu_j(+1) + \mu_j(-1)}
  \;\in\; [-1,\,1].
\end{equation}
For dead neurons with $\mu_j(+1) = \mu_j(-1) = 0$ we set
$\delta_j = 0$ by convention.  The identity
\begin{equation}\label{eq:mu-delta}
  \mu_j(S_j)
  \;=\;
  \bar\mu_j\,\bigl(1 + \delta_j\, S_j\bigr)
\end{equation}
is immediate from the definitions.

The quantity $\delta_j$ measures how strongly a neuron's expected
magnitude depends on its sign.  Three regimes arise:
\begin{enumerate}
\item \textbf{ReLU} ($\delta_j = 1$ exactly).  Inactive neurons
  have $\mu_j(-1) = 0$; the sign bit completely determines whether
  the neuron contributes.

\item \textbf{SiLU / GELU} ($\delta_j$ large and positive
  empirically, typically $0.4$--$0.95$).  The negative side is
  attenuated but not zeroed.  Whether
  $\mu_j(-1) \ll \mu_j(+1)$ holds in a given network depends on
  the pre-activation distribution, not only on the activation
  (see~\S\ref{app:gauge-activations}).

\item \textbf{Tanh} ($\delta_j \approx 0$ empirically).  Because
  $\tanh$ is odd, $|\tanh(z)| = |\tanh(-z)|$ for all~$z$.  Under
  the approximately symmetric pre-activation distributions produced
  by batch-normalised networks, this yields
  $\mu_j(+1) \approx \mu_j(-1)$ and hence $\delta_j \approx 0$.
  This is an empirical statement about trained networks, not a
  consequence of oddness alone
  (see the discussion in~\S\ref{app:gauge-activations}).
\end{enumerate}

\subsection{How asymmetry enters the sign-only margin}
\label{app:asymmetry-margin}

Under the canonical gauge, the sign-only logit is
$\tilde f_c(x) = \sum_j w_{cj}\,S_j(x)\,\mu_j(S_j(x)) + b_c$.
Substituting \eqref{eq:mu-delta} and using $S_j^2 = 1$:
\begin{align}
  \tilde f_c(x)
  &= \sum_j w_{cj}\,\bar\mu_j\,S_j\,(1 + \delta_j S_j) + b_c
  \nonumber\\[2pt]
  &= \underbrace{%
       \sum_j w_{cj}\,\bar\mu_j\,S_j
     }_{\text{sign-linear term}}
     \;+\;
     \underbrace{%
       \sum_j w_{cj}\,\bar\mu_j\,\delta_j
     }_{\text{constant asymmetry offset}}
     \;+\; b_c\,.
  \label{eq:ftilde-delta}
\end{align}
The pairwise sign-only margin therefore decomposes as
\begin{equation}\label{eq:margin-delta}
  \widetilde{M}_{a,c}(x)
  \;=\;
  \underbrace{%
    \sum_j (w_{a,j} - w_{c,j})\,\bar\mu_j\,S_j(x)
  }_{\widetilde{M}^{\mathrm{lin}}_{a,c}(x)}
  \;+\;
  \underbrace{%
    \sum_j (w_{a,j} - w_{c,j})\,\bar\mu_j\,\delta_j
  }_{\Delta^{(\delta)}_{a,c}}
  \;+\;
  (b_a - b_c).
\end{equation}

Equation~\eqref{eq:margin-delta} is the key observation.
Asymmetry affects gate domination entirely through the
\emph{numerator} of the pair-specific score
$Z_{1,c}(x) = |\widetilde{M}_{\hat y,c}(x)| \,/\,
\sigma_{\hat y,c}(\mathbf{S}(x))$
from Eq.~\eqref{eq:Z1-def}; the structure of the disagreement
bound itself (Theorem~\ref{thm:gate-dom-full}) is unchanged.

\paragraph{Interpretation.}
The sign-linear term $\widetilde{M}^{\mathrm{lin}}_{a,c}$ depends on
the input through $\mathbf{S}(x)$.  The asymmetry offset
$\Delta^{(\delta)}_{a,c}$ is a constant that depends only on the
learned weights and the activation-specific asymmetry profile.
When this offset is aligned with the output-weight differences
(i.e.\ has the same sign as $\widetilde{M}^{\mathrm{lin}}_{a,c}$
for correctly classified inputs), it enlarges the sign-only margin
and increases $Z_1$.  When $\delta_j \approx 0$ for all~$j$, the
offset vanishes and the sign-only margin is governed entirely by the
sign-linear term.  There is no general monotone lower bound on
$|\widetilde{M}_{a,c}|$ in terms of the mean asymmetry
$\bar\delta = d_L^{-1}\sum_j \delta_j$ alone, because the effect
depends on alignment with the class-weight structure.

\subsection{Why $Z_1$ is strong for SiLU/ReLU but weak for tanh}
\label{app:asymmetry-z1}

The gate-domination bound
$\varepsilon_1 \le 2(C{-}1)\,\mathbb{E}_x[\exp(-Z_1^2/2)]$
is activation-universal.  Asymmetry does not alter the bound; it
alters how large $Z_1$ typically is.

For \textbf{ReLU} ($\delta_j = 1$), inactive neurons contribute zero
to the sign-only logit and the margin is dominated by the active set.
The asymmetry offset $\Delta^{(\delta)}_{a,c}$ is maximal.
For \textbf{SiLU and GELU}, $\delta_j$ remains strongly positive
empirically: positive-sign neurons contribute substantially more than
negative-sign neurons, so the offset in~\eqref{eq:margin-delta}
amplifies the sign-only margin.  This makes large $Z_1$ values
typical in wide, deep feedforward networks with these activations.

For \textbf{tanh}, trained networks typically satisfy
$\mu_j(+1) \approx \mu_j(-1)$, so $\delta_j \approx 0$ and the
offset is negligible.  The sign-only margin then reduces
approximately to the sign-linear term:
\[
  \widetilde{M}_{a,c}(x)
  \;\approx\;
  \sum_j (w_{a,j} - w_{c,j})\,\bar\mu_j\,S_j(x)
  \;+\; (b_a - b_c).
\]
Because $\tanh$ is odd, every neuron contributes with comparable
magnitude regardless of sign.  The margin is therefore the difference
of two nearly balanced sums of mixed-sign terms, which tends to
cancel and yields a typical scale much smaller than in the
asymmetric case.  Consequently, $Z_1$ remains modest even when
the sign code is causally dominant under intervention and highly informative.

\subsection{Empirical validation: tanh versus SiLU}
\label{app:tanh-vs-silu}

We now verify the predictions of~\eqref{eq:margin-delta}
quantitatively.  We train 240 FFNN models on CIFAR-10 with each
activation (12~depths $\times$ 10~widths $\times$ 2~seeds) and
compute all decomposition quantities from a single forward pass over
the test set.

\paragraph{Notation.}
To compare across models we report the \emph{median gate-only margin}
\begin{equation}\label{eq:gamma-tilde}
  \tilde\gamma_{\mathrm{med}}
  \;=\;
  \operatorname{median}_{x}\;
  \min_{c \neq \hat{\tilde y}(x)}\;
  \widetilde{M}_{\hat{\tilde y},c}(x),
\end{equation}
i.e.\ the median over test points of the minimum pairwise sign-only
margin from~\eqref{eq:margin-delta}.  We abbreviate this as
$\tilde\gamma$ in Table~\ref{tab:tanh-vs-silu}.  The residual noise
$\hat\sigma_r$ is the empirical residual-scale estimator associated
with the diagonal proxy in \cref{app:condition-residual}; $\bar\delta$ is the layer-mean
asymmetry $d_L^{-1}\sum_j \delta_j$; and $Z_1$ is the median of the
overall gate-domination score from Eq.~\eqref{eq:Z1-def}.

Table~\ref{tab:tanh-vs-silu} shows representative configurations;
Figure~\ref{fig:asymmetry-heatmaps} displays the full heatmaps
across the width--depth grid, and
Figure~\ref{fig:delta-vs-margin} shows that $\bar\delta$ is a strong
predictor of the gate-only margin across models.

\begin{table}[t]
\centering
\caption{Magnitude asymmetry and gate-domination diagnostics for
SiLU versus tanh on CIFAR-10 (FFNN, selected configurations,
averaged over 2 seeds).}
\label{tab:tanh-vs-silu}
\small
\begin{tabular}{@{}rr|cccc|cccc@{}}
\toprule
& & \multicolumn{4}{c|}{\textbf{SiLU}}
  & \multicolumn{4}{c}{\textbf{Tanh}} \\
Width & Depth
  & $\bar\delta$ & $\hat\sigma_r$ & $\tilde\gamma$ & $Z_1$
  & $\bar\delta$ & $\hat\sigma_r$ & $\tilde\gamma$ & $Z_1$ \\
\midrule
  64 &  6 & 0.94 & 8.39 & 1.46 & ${\sim}4$
         & 0.007 & 0.84 & 0.86 & ${\sim}0.6$ \\
  64 & 12 & 0.93 & 6.53 & 1.44 & ${\sim}6$
         & 0.003 & 0.89 & 1.00 & ${\sim}0.5$ \\
 256 &  6 & 0.86 & 3.68 & 1.40 & ${\sim}5$
         & 0.000 & 0.87 & 1.26 & ${\sim}0.6$ \\
 256 & 12 & 0.71 & 1.82 & 2.53 & ${\sim}9$
         & 0.001 & 0.75 & 1.17 & ${\sim}0.5$ \\
1024 &  6 & 0.55 & 1.19 & 2.08 & ${\sim}7$
         & 0.001 & 0.65 & 0.98 & ${\sim}0.5$ \\
1024 & 12 & 0.41 & 0.53 & 3.01 & ${\sim}10$
         & 0.001 & 0.61 & 0.85 & ${\sim}0.5$ \\
4096 &  6 & 0.37 & 0.49 & 2.71 & ${\sim}9$
         & 0.000 & 0.60 & 0.92 & ${\sim}0.5$ \\
4096 & 12 & 0.66 & 0.29 & 2.89 & ${\sim}10$
         & 0.000 & 0.59 & 0.95 & ${\sim}0.4$ \\
\bottomrule
\end{tabular}
\end{table}

\paragraph{Key observations.}

\begin{enumerate}
\item \textbf{Tanh has $\bar\delta \approx 0$ everywhere.}
  Across all 120 (width, depth) configurations, the median
  $|\mu_j(+1) - \mu_j(-1)|$ is $0.0014$, and 95 of 120 cells have
  gap $\le 0.005$
  (Figure~\ref{fig:asymmetry-heatmaps}, top row).
  This confirms that the magnitude is effectively independent of the
  sign in trained tanh networks, as predicted by approximate
  pre-activation symmetry.

\item \textbf{SiLU has large $\bar\delta$.}
  At small widths, $\bar\delta > 0.9$ (a near-ReLU regime where the
  negative side is strongly suppressed).  At large widths and depths,
  $\bar\delta$ decreases to ${\sim}0.4$--$0.7$ as pre-activations
  concentrate closer to zero, but remains far above zero
  (Figure~\ref{fig:asymmetry-heatmaps}, top row).

\item \textbf{$\hat\sigma_r$ is smaller for tanh than for SiLU.}
  SiLU has larger $\hat\sigma_r$ in 118 of 120 cells
  (Figure~\ref{fig:asymmetry-heatmaps}, bottom row).
  The residual noise is \emph{not} the bottleneck for tanh---the
  signal (margin) is.

\item \textbf{Gate-only margins are 2--3$\times$ larger for SiLU.}
  At (4096, 12), SiLU achieves
  $\tilde\gamma_{\mathrm{med}} = 2.89$ versus tanh's $0.95$
  (Figure~\ref{fig:asymmetry-heatmaps}, middle row;
  Figure~\ref{fig:delta-vs-margin}).
  This gap, combined with the $\hat\sigma_r$ ratio, explains the
  order-of-magnitude difference in $Z_1$.

\item \textbf{Despite small $Z_1$, tanh sign codes are causally
  necessary.}
  Section~\ref{sec:causal} showed that tanh networks on SVHN retain
  $78\%$ accuracy (from $80\%$ baseline) under total magnitude
  replacement, and collapse to chance under $10\%$ sign corruption.
  The sign code carries the classification signal; the $Z_1$
  diagnostic cannot certify this because the conditional-mean gauge
  extracts no asymmetry from an approximately odd activation.
\end{enumerate}

\begin{figure}[t]
\centering
\includegraphics[width=\textwidth]{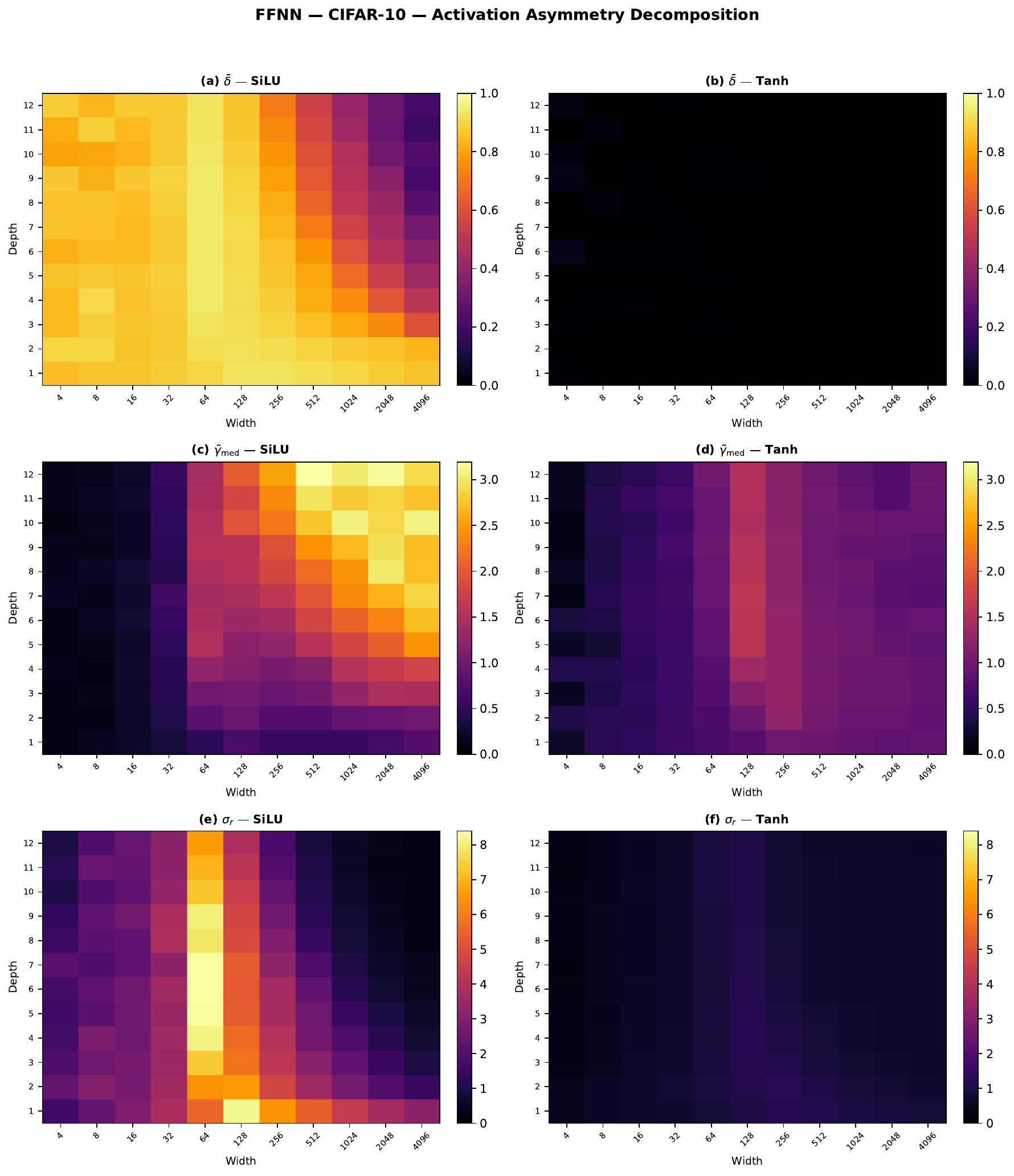}
\caption{\textbf{Activation asymmetry explains the $Z_1$ gap between
SiLU and tanh.}  All panels show FFNN on CIFAR-10, widths 4--4096,
depths 1--12, averaged over 2 seeds.
\textbf{Top row:}~the magnitude asymmetry $\bar\delta$
(Eq.~\ref{eq:delta-def}) is uniformly large for SiLU and uniformly
near zero for tanh.
\textbf{Middle row:}~gate-only margins $\tilde\gamma_{\mathrm{med}}$
(Eq.~\ref{eq:gamma-tilde}) are 2--3$\times$ larger for SiLU,
because the asymmetry offset in~\eqref{eq:margin-delta} amplifies
the contribution of correctly-routed neurons.
\textbf{Bottom row:}~residual noise $\hat\sigma_r$ is comparable or
\emph{larger} for SiLU, confirming that the $Z_1$ gap is entirely
due to the margin, not the noise.}
\label{fig:asymmetry-heatmaps}
\end{figure}

\begin{figure}[t]
\centering
\includegraphics[width=0.6\textwidth]{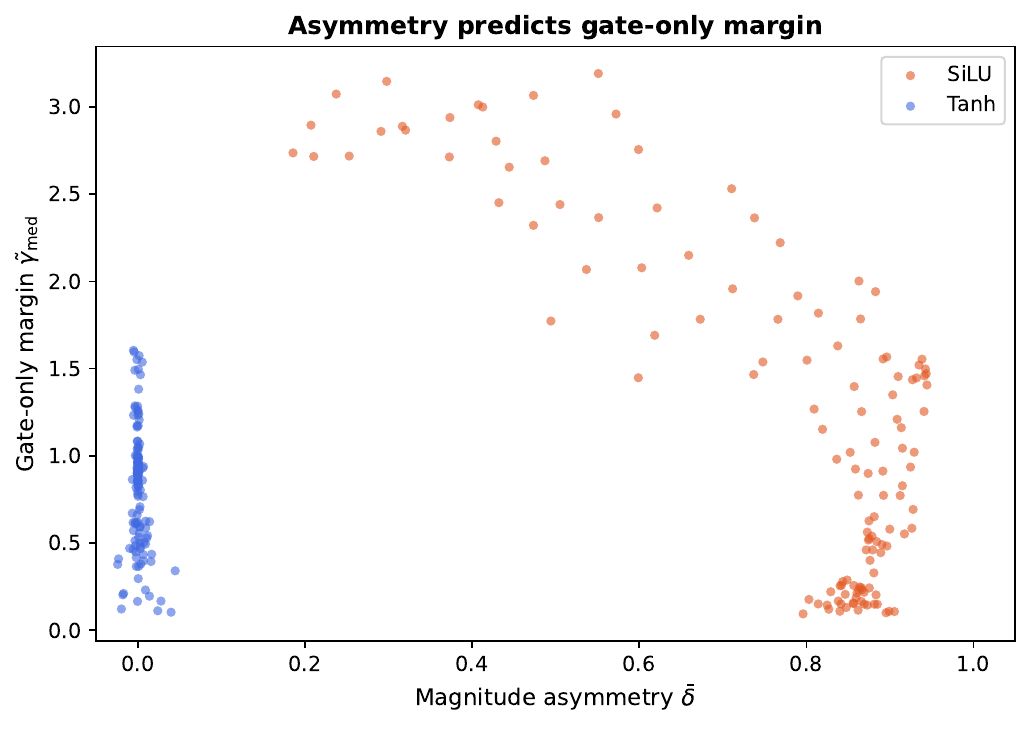}
\caption{\textbf{Asymmetry predicts gate-only margin.}  Each point
is one trained FFNN on CIFAR-10.  SiLU models (orange) span
$\bar\delta \in [0.3, 0.95]$ with margins up to $3.2$; tanh models
(blue) cluster at $\bar\delta \approx 0$ with margins below $1.6$.}
\label{fig:delta-vs-margin}
\end{figure}

\subsection{Why residual architectures can have weak $Z_1$}
\label{app:resnet-z1}

Residual architectures present a distinct mechanism for weak $Z_1$.

In a residual block the output is
$h^{(\ell+1)} = \varphi\bigl(h^{(\ell)} + F^{(\ell)}(h^{(\ell)})
\bigr)$,
and the sign--magnitude decomposition at the block output mixes the
skip connection $h^{(\ell)}$ (which carries its own sign--magnitude
structure) with the branch output $F^{(\ell)}$.  The last-layer
magnitude $|h_j^{(L)}|$ therefore depends on the entire history of
skip connections, creating dependencies that the single-layer
conditional mean $\mu_j(S_j)$ cannot capture.  This inflates the
effective residual scale $\sigma_{\hat y,c}(\mathbf{S})$ in the
denominator of $Z_{1,c}$, rather than shrinking the numerator as in
the tanh case.

The two mechanisms are thus complementary:
\begin{enumerate}
\item \textbf{Odd activations (tanh):} weak $Z_1$ primarily because
  the sign-only margin $|\widetilde{M}_{a,c}|$ is small (the
  asymmetry offset vanishes).
\item \textbf{Residual architectures:} weak $Z_1$ primarily because
  the residual scale $\sigma_{\hat y,c}$ is large (skip-path
  magnitude dependencies inflate the denominator).
\end{enumerate}
In both cases, the causal intervention results
(Section~\ref{sec:causal}) provide the more robust, architecture-
and activation-universal evidence that signs carry the decisive
classification signal.  For ResNet-18 with SiLU, replacing every
magnitude with a random draw preserves $93$--$95\%$ accuracy across
all four tested activations; flipping $10\%$ of sign bits collapses
accuracy to chance.

A multi-layer decomposition that conditions on the sign codes of all
preceding blocks---rather than using a single last-layer conditional
mean---would yield tighter residual estimates and potentially larger
$Z_1$ values.  We leave such multi-layer certificates to future work,
noting that the single-layer $Z_1$ is best understood as a
diagnostic for feedforward architectures, while the causal
intervention framework provides the architecture-universal evidence.

\begin{figure}[htb!]
\centering
\includegraphics[width=\textwidth]{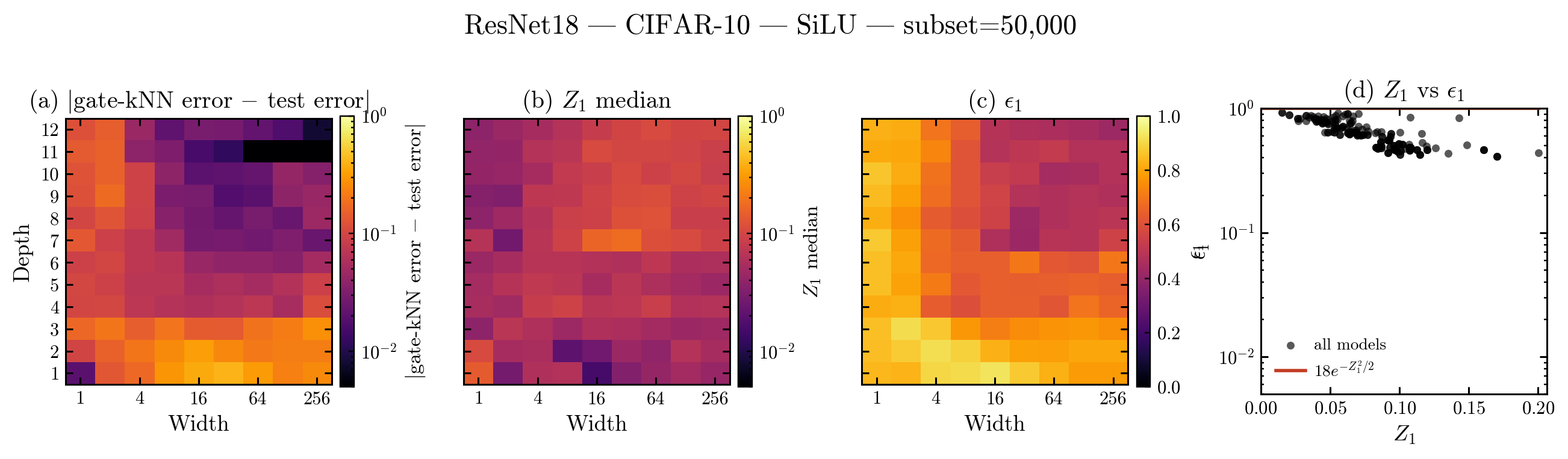}
\caption{Theoretical quantities across the depth-width grid for a ResNet on Cifar-10 with SiLU activation.}
\label{fig:resnet-z1-diagnostics}
\end{figure}

\subsection{Choice of repulsion regularizer: gate proxy vs.\ sign
proxy}
\label{app:reg-gate}

The repulsion loss $\mathcal{L}_{\mathrm{rep}}$ (Eq.~\ref{eq:repulsion})
is designed to destroy same-class sharing of the \emph{dominant
computational pathway}.  Which pathway is dominant depends on the
activation's asymmetry profile, so the correct continuous proxy for
the regularizer depends on the activation.

\subsubsection*{ReLU, SiLU, and GELU: sigmoid gate proxy}

For these activations, $\delta_j$ is large and positive: neurons
with $S_j = +1$ contribute far more strongly than neurons with
$S_j = -1$.  The dominant computational pathway is therefore the
\emph{active set} $\{j : g_j = 1\}$, where
$g_j = \mathbf{1}[z_j > 0]$.  We regularise the continuous gate
proxy
\begin{equation}\label{eq:gate-proxy}
  g(x) \;=\; \sigma\!\bigl(z(x)/\tau\bigr)
  \;\in\; (0,1)^D,
\end{equation}
which for small temperature $\tau$ approximates the binary gate.
Penalising $\bigl(g(x_i)^\top g(x_j)\bigr)^2$ for same-class pairs
directly drives their active-set overlap toward zero.

\paragraph{Why gate repulsion is stronger than sign repulsion for
non-odd activations.}
Two inputs with $D$ neurons, each with roughly $D/2$ active, can
have \emph{orthogonal} sign codes in $\{-1,+1\}^D$ while still
sharing approximately $D/4$ co-active neurons.  Sign code
orthogonality treats agreements on the active side and agreements on
the inactive side symmetrically, but only the active side carries
substantial forward-pass signal when $\delta_j$ is large.
Gate repulsion, by contrast, can drive the co-active count all the
way to zero, destroying the shared computational channel entirely.

\paragraph{Empirical confirmation.}
On SVHN (FFNN, width~$1024$, depth~$8$, SiLU), sigmoid-gate
repulsion achieves the intended effect: training accuracy exceeds
$97\%$, test accuracy collapses to approximately $40\%$, and
leave-one-out purity at $k{=}5$ drops to $0.34$.  Matched
sign-based repulsion at the same nominal strength either
destabilises training (SiLU/GELU/ReLU, due to steeper gradients from
the tanh proxy) or leaves enough active-set overlap for partial
generalization to survive.

\subsection{Tanh networks: sign proxy}
\label{app:reg-tanh}

Tanh is the important exception.  Because $\delta_j \approx 0$ in
trained tanh networks, the forward pass does not privilege an active
set: positive-sign and negative-sign neurons contribute with
comparable magnitude.  A sigmoid gate proxy would impose an
artificial positive-vs-nonpositive split that is not aligned with
the tanh network's geometry.

For tanh models we therefore use the sign proxy
\begin{equation}\label{eq:sign-proxy}
  q(x) \;=\; \tanh\!\bigl(z(x)/\tau\bigr)
  \;\in\; (-1,1)^D,
\end{equation}
which for small $\tau$ approximates the binary sign code
$S_j = \operatorname{sgn}(z_j) \in \{-1,+1\}$.  This is the correct
matched intervention: when the activation treats both sides
approximately symmetrically, the relevant binary structure is the
sign code itself, not a privileged active set.

\paragraph{Consistency with the rest of the paper.}
The distinction between gate proxy and sign proxy arises
\emph{only} in the training regularizer.  All theoretical
diagnostics (the gate-domination score $Z_1$, the error chain, the
sub-Gaussian bound), all empirical measurements (sign-kNN, LOO
purity), and all causal interventions (sign flip, magnitude
replacement) are reported throughout the paper in the common binary
sign-code framework $S_j \in \{-1,+1\}$.  The two frameworks are
related by the bijection $S_j = 2g_j - 1$; the gate representation
$g_j \in \{0,1\}$ appears only inside the regularizer where it
provides the stronger intervention for non-odd activations.

\subsection{Summary}
\label{app:asymmetry-summary}

\begin{enumerate}
\item The gate-domination theorem
  (Theorem~\ref{thm:gate-dom-full}) is activation-universal;
  asymmetry does not change the bound itself.

\item Asymmetry enters through the sign-only margin via the
  decomposition~\eqref{eq:margin-delta}: large positive $\delta_j$
  amplifies the margin; $\delta_j \approx 0$ eliminates the
  asymmetry offset and leaves only the sign-linear term.

\item For ReLU, SiLU, and GELU (large $\delta_j$), gate domination
  is readily achieved in wide, deep feedforward networks.

\item For tanh ($\delta_j \approx 0$) and for residual architectures
  (large residual scale), $Z_1$ can remain modest even when the
  sign code is causally dominant under intervention and informationally sufficient
  (Table~\ref{tab:tanh-vs-silu};
  Figures~\ref{fig:asymmetry-heatmaps}
  and~\ref{fig:delta-vs-margin}).

\item The training regularizer is matched to the activation's
  geometry: sigmoid gate proxy~\eqref{eq:gate-proxy} for
  ReLU-family activations (where the active set is the dominant
  channel), tanh sign proxy~\eqref{eq:sign-proxy} for tanh
  (where no such asymmetric split exists).
\end{enumerate}

Table~\ref{tab:asymmetry-summary} collects the evidence across
activations and architectures.

\begin{table}[h]
\centering
\caption{Summary of gate-domination evidence across activations and
architectures.}
\label{tab:asymmetry-summary}
\small
\begin{tabular}{@{}lccc@{}}
\toprule
Configuration
  & Sign causal (\S\ref{sec:causal})
  & Sign-kNN tracks accuracy (\S\ref{sec:signknn})
  & $Z_1$ certifies \\
\midrule
FFNN + SiLU/GELU/ReLU
  & \checkmark & \checkmark & \checkmark \\
FFNN + Tanh
  & \checkmark & \checkmark
  & \texttimes\; (small $\delta$) \\
ResNet-18 + SiLU
  & \checkmark & \checkmark
  & \texttimes\; (skip connections) \\
ViT + GELU
  & \checkmark & \checkmark & not tested \\
\bottomrule
\end{tabular}
\end{table}
\newpage



\end{document}